\documentclass{article}
\usepackage{textcomp}
\usepackage[utf8]{inputenc}
\usepackage{verbatim}
\usepackage{float}
\usepackage{mathrsfs}
\usepackage{amsmath}
\usepackage{amsthm}
\usepackage{graphicx}
\usepackage[bookmarks=false,
 breaklinks=false,pdfborder={0 0 1},backref=section,colorlinks=false]
 {hyperref}
\hypersetup{
 hidelinks}
\usepackage[switch]{lineno}
\makeatletter

\DeclareTextSymbolDefault{\textquotedbl}{T1}
\providecommand{\tabularnewline}{\\}
\floatstyle{ruled}
\newfloat{algorithm}{tbp}{loa}
\providecommand{\algorithmname}{Algorithm}
\floatname{algorithm}{\protect\algorithmname}

\theoremstyle{plain}
\newtheorem{thm}{\protect\theoremname}
\theoremstyle{definition}
\newtheorem{defn}[thm]{\protect\definitionname}
\theoremstyle{plain}
\newtheorem{lem}[thm]{\protect\lemmaname}
\theoremstyle{plain}
\newtheorem{prop}[thm]{\protect\propositionname}

\usepackage{ijcai26}

\usepackage{times}
\usepackage{soul}
\usepackage{url}
\usepackage[small]{caption}
\usepackage{graphicx}
\usepackage{amsthm}
\usepackage{comment}

\usepackage{textcomp}
\usepackage[utf8]{inputenc}
\usepackage{amsmath}
\usepackage{amsthm}
\usepackage{amssymb}

\makeatletter

\providecommand{\tabularnewline}{\\}

\title{Completeness of Unbounded Best-First Minimax and Descent Minimax}

\author{
Quentin Cohen-Solal\\
\affiliations
LAMSADE, Université Paris-Dauphine, PSL, CNRS, Paris, France\\
\emails
quentin.cohen-solal@dauphine.psl.eu 
}

\makeatother

\providecommand{\definitionname}{Definition}
\providecommand{\lemmaname}{Lemma}
\providecommand{\propositionname}{Proposition}
\providecommand{\theoremname}{Theorem}

\begin{document}
\maketitle
\begin{abstract}
In this article, we focus on search algorithms for two-player perfect
information games, whose objective is to determine the best possible
strategy, and ideally a winning strategy. 

Unfortunately, some search algorithms for games in the literature
are not able to always determine a winning strategy, even with an
infinite search time. This is the case, for example, of the following
algorithms: Unbounded Best-First Minimax and Descent Minimax, which
are core algorithms in state-of-the-art knowledge-free reinforcement
learning. 

They were then improved with the so-called completion technique. However,
whether this technique sufficiently improves these algorithms to allow
them to always determine a winning strategy remained an open question
until now.

To answer this question, we generalize the two algorithms (their versions
using the completion technique), and we show that any algorithm of
this class of algorithms computes the best strategy.

Finally, we experimentally show that the completion technique improves
winning performance.
\end{abstract}
\global\long\def\et{\ \wedge\ }%
\global\long\def\terminal{\mathrm{t}}%
\global\long\def\joueur{\mathrm{j}}%
\global\long\def\joueurUn{\mathrm{1}}%
\global\long\def\joueurDeux{\mathrm{2}}%
\global\long\def\fbin{\mathrm{f_{b}}}%
\global\long\def\fterminal{f_{\mathrm{t}}}%
\global\long\def\fadapt{f_{\theta}}%
\global\long\def\actions{\mathcal{A}}%
\global\long\def\algo{\mathscr{A}}%
\global\long\def\continue{\mathrm{continue}}%
\global\long\def\prioritychildcalculation{\mathrm{calculate\_priority\_child}}%

\global\long\def\etats{\mathcal{S}}%
\global\long\def\Spartiel{\mathcal{S}_{\mathrm{p}}}%
\global\long\def\ubfm{\mathrm{UBFM}}%
\global\long\def\ubfms{\ubfm_{\mathrm{\mathrm{s}}}}%
\global\long\def\umaxn{\ubfm^{n}}%
\global\long\def\maxn{\mathrm{Max}^{n}}%
\global\long\def\umaxns{\ubfms^{n}}%
\global\long\def\descente{\mathrm{Descent}}%
\global\long\def\descenten{\descente^{n}}%
\global\long\def\fbinn{\fbin}%
\global\long\def\fterminaln{\fterminal}%
\global\long\def\fadaptn{\fadapt}%
\global\long\def\ou{\,\vee\,}%
\global\long\def\ubfm{\mathrm{UBFM}}%
\global\long\def\argmax{\operatorname*{\mathrm{arg\,max}}}%
\global\long\def\argmin{\operatorname*{\mathrm{arg\,min}}}%
\global\long\def\liste#1#2{\left\{  #1\,|\,#2\right\}  }%
\global\long\def\minimum{\operatorname*{\mathrm{min}}}%

\section{Introduction}

Unbounded Best-First Minimax~\cite{korf1996best,cohen2020minimax,cohen2019apprendre}
is an old game tree search algorithm which is rarely used. Unlike
classic Minimax, this algorithm performs a search at an unbounded
depth, making it possible to explore the game tree non-uniformly.
Thus, it can anticipate many future turns in the most promising parts
of the game tree. Most recently, it has been successfully applied
in the context of \emph{reinforcement learning without knowledge}~\cite{cohen2023minimax,2020learning}.
More precisely, for many games, at least combined with a search algorithm
dedicated to reinforcement learning, named \emph{Descent Minimax},
Unbounded Minimax gives much better results than the old state of
the art of reinforcement learning without knowledge (i.e. the AlphaZero
algorithm~\cite{silver2018general}, based on Monte Carlo Tree Search
(MCTS)~\cite{Coulom06,browne2012survey}, another search algorithm,
which is stochastic). A thorough experimental study showed that Unbounded
Minimax was the best-performing search algorithm~\cite{cohen2025study}. 

The Descent (Minimax) algorithm is a modification of Unbounded Best-first
Minimax that builds the search tree focusing more on depth-first exploration
in order to more effectively propagate endgame information during
learning. 

In their basic versions, Unbounded Minimax and Descent are not complete
in certain contexts such as reinforcement learning, that is to say
they do not allow one to compute an equilibrium point of the complete
game tree (i.e. a winning strategy for each player), even in infinite
time \cite{2020learning}. For example, the basic algorithms, not
using the fact that certain states are \textit{resolved}, will always
choose to play during the exploration a resolved state rather than
an unresolved state whose evaluation is worse than the resolved state
(whereas the evaluation of an unresolved state is only an estimate,
so the unresolved state may be better). This blocks the exploration
and prevents, when this scenario occurs, calculating the minimax value
and therefore determining the best action to play.

Being able to determine the best strategy is crucial. In a state where
one has the advantage, determining the best strategy guarantees us
to keep this advantage until the end of the game and therefore to
win the match. In a state where the opponent has the advantage, determining
the best strategy can make it possible to exploit an error in the
opponent's strategy to take advantage and thus win the game (if the
latter has not determined the best strategy).

Although it is usually impractical to determine the best strategy
in the early game, it is possible to do so in the late game. The objective
of the match is therefore to play as well as possible from the start,
so as to reach a sufficiently favorable state in the late game where
a winning strategy can be determined.

In \cite{2020learning}, an improvement of Descent and Unbounded Minimax,
which is called completion, has been proposed. However, it was not
clear whether this modification made both algorithms complete, and
more generally if using Completion is actually useful in practice

Thus, we show in this article that  completion makes the two algorithms
complete in finite time. More precisely, on the one hand, we identify
a whole class of search algorithms, grouping Unbounded Best-First
Minimax and Descent. On the other hand, we show that each algorithm
belonging to this class always computes, for any two-player game with
perfect information, one of its best strategies (in finite time).
Finally, we show that the use of completion actually and significantly
improves the practical winning performance of Unbounded Minimax, showing
that it is important to use Completion.

We start by making, in the next section, a quick survey about search
algorithms for two-player perfect-information games. In Section \ref{games_and_completeness},
we formally define what we call two-player games with perfect information
and the fact that a search algorithm is complete (that is, it calculates
the best strategy). In particular, we explain the notations used.
In Section \ref{state_of_the_art_algorithms}, we present the two
algorithms: Unbounded Best-First Minimax and Descent using the completion
technique. Then, in Section \ref{gum}, we generalize these two algorithms
within a particular class of search algorithms, that we call Unbounded
Minimax-based algorithms. Moreover, in Section \ref{proofs}, we show
that Unbounded Minimax-based algorithms are all complete: they effectively
compute the best strategy (proofs are in Appendix). Finally, in Section~\ref{sec:Experimental-Comparison},
we experimentally show that the completion technique improves winning
performance for Unbounded Best-First Minimax.

\section{Related Work}

Artificial Intelligence in Games~\cite{millington2018artificial,bouzy2020artificial,mandziuk2010knowledge,allis1994searching,yannakakis2018artificial,russell2016artificial}
has a long research history.

The basic search algorithm in two-player perfect information games
is minimax~\cite{morgenstern1953theory}. It builds a search tree
whose nodes are states of the game linked to each other when they
are reachable by an action. There are $\max$ nodes, i.e. states where
the first player plays and seeks to maximize his gain and $\min$
nodes, i.e. states where the second player plays and seeks to minimize
the gain of the first player (we place ourselves in the context of
zero-sum games). The search is carried out up to a depth $p$ fixed
in advance. The minimax algorithm then recursively computes the \emph{approximate
minimax value associated with the search tree of depth $p$ }for each
state in that tree. The value of a non-terminal leaf state is given
by an \emph{evaluation function} which provides an indication on how
promising this state is or not, in order to compare the states with
each other. The value of a terminal leaf state is its gain (usually
$1$ if the first player wins, $0$ for a draw and $-1$ if the first
player loses; the gain can also be the difference of player scores).
The value of a $\max$ state is the maximum of the value of its child
states and the value of a $\min$ state is the minimum of the value
of its child states. Note that the \emph{exact minimax value} corresponds
to the ``infinite'' depth, i.e. when all the descendants of all
the states of the tree are in the search tree. In that case, each
leaf is a terminal state. Therefore, an evaluation function gives
a better level of play when it constitutes a good approximation of
the exact minimax value of states. Once the search tree has been constructed
from a game state, the best action in this state is the one that leads
to the child state with the best approximate minimax value (the maximum
value child for a max state and the minimum value child for a min
state).

Many improvements and variants of minimax have been proposed, in particular
to reduce the number of states to be analyzed (notably by reducing
the branching factor), in order to speed up the calculation of the
best action (which is exponential with respect to the depth of the
search). The standard improvement is the Alpha-Beta algorithm~\cite{knuth1975analysis,baudet1978branching,fink1982enhancement,pearl1982solution}
which performs exact pruning during the minimax search. It reduces
the number of states to be evaluated without changing the strategy
determined by the search. More precisely, it infers that some parts
of the search tree do not influence the minimax value of the root
of the search tree and are therefore unnecessary for the calculation
of the best strategy.

Among the variants of minimax, there is the family of fixed-depth
best-first minimax~\cite{plaat1996best,plaat2017minimax}, and in
particular the MTD(f) algorithm, which iteratively performs alpha-beta
searches by assuming that the minimax value is in a specific interval
in order to perform strong pruning and thus obtain an inexpensive
search. This interval is modified at each iteration, in order to finally
find the minimax value at depth $p$.

Another variant is the Principal Variation Search algorithm~\cite{pearl1980asymptotic,pearl1980scout,qi2020optimization}
which checks that the states are well ordered by their value, in particular
the state of best value. The verification is less expensive than applying
the alpha-beta algorithm. If the verification fails, a standard alpha-beta
search is used to determine the minimax value. Different heuristics
were also used to modify the search: for example the null move heuristic~\cite{donninger1993null}
and the history heuristic~\cite{schaeffer1983history}, as well as
move ordering technique \cite{fink1982enhancement}, which influences
the pruning performance of alpha-beta. Moreover, there is also the
quiescence search~\cite{beal1990generalised} which searches until
the leaves of the tree are \textquotedbl quiet\textquotedbl .

There are also unbounded algorithms, i.e. with an unfixed depth. They
make it possible to mitigate the problem of the combinatorial explosion
by building a non-uniform game tree, which permits to explore very
in depth locally and to perform powerful but possibly temporary pruning.
Proof Number Search first explores the \emph{a priori} faster-to-resolve
states. B{*}~\cite{berliner1981b} uses two ratings: an optimistic
evaluation and a pessimistic evaluation and seeks to show that the
pessimistic value of the \emph{a priori} best state is better than
the optimistic values of the other states. Conspiracy search~\cite{mcallester1988conspiracy,schaeffer1990conspiracy}
consists in exploring the states that most affect the minimax value
first. Finally, unbounded best-first minimax first explores the states
of \emph{a priori} best minimax value. This algorithm requires a good
evaluation function to give good results: the better the evaluation
function is, the more powerful and useful the prunings are and the
better the level of play is. But conversely, worse evaluation functions
lead to an even worse level of play than with other algorithms. For
this reason, this algorithm has a limited practical use, due to the
lack of a learning procedure providing a sufficiently good evaluation
function (until~\cite{cohen2019apprendre}). The Descent search is
a variant of Unbounded Best-First Minimax which explores the game
tree by focusing much more on depth-first exploration. The objective
of this algorithm is not only to calculate the best possible approximate
minimax value of the root of the search tree in limited time but also
to provide better quality data for learning.

Opposing minimax-type approaches, there is Monte Carlo Tree Search~\cite{browne2012survey,Coulom06,chaslot2008monte,gelly2011monte,kocsis2006bandit,kocsis2006improved}
which replaces the minimax value of a state by the victory statistics
of the game simulations carried out during the search passing through
this state. It uses a regret term to bias the search in order to manage
the exploration-exploitation dilemma. Note that it has been proven
that the standard MCTS is complete in infinite time~\cite{kocsis2006bandit,kocsis2006improved}.
It requires for this to fully explore the game tree. This is not the
case for Unbounded Minimax and Descent thanks to their exact pruning.
A variant of MCTS has been proposed which is complete in finite time
\cite{winands2008monte}. However, this variant does not manage resolved
draws (contrary to Unbounded Minimax and Descent with completion).
In addition, there are over-estimates and under-estimates problems
because pruning from resolution biases values of states in MCTS (see
Section 4.2 of \cite{winands2008monte}). To mitigate this problem,
they use an arbitrary parameter as a threshold value and it requires
that some resolved states are explored again and several times. This
problem and the need to explore a resolved state again do not appear
with Unbounded Minimax and Descent with completion.

Monte Carlo Tree Search is the search of the AlphaZero algorithm,
which has managed to reach the level of the best human players in
Go and Chess~\cite{silver2018general,silver2017mastering}. At least
for accessible learning durations, the Athénan algorithm~\cite{cohen2019apprendre}
based on Unbounded Best-First Minimax and Descent Minimax gives better
performance than the AlphaZero algorithm (at least thirty times faster)~\cite{cohen2023minimax}.
This was confirmed at international tournaments: at the Computer Olympiad,
Athénan shattered all records, overwhelmingly defeating all competitors,
including those using AlphaZero-based approaches. The record for gold
medals in a single year is threefold~\cite{cohen2023athenan}. Since
its introduction in 2020, it has won 57 gold medals and currently
holds the title in 20 games~\cite{2020learning,cohen2021descent,cohen2023athenan}.
At the most recent competition, it won the gold medal in 9 out of
11 games.

Finally, there are also approaches hybridizing minimax and Monte Carlo
Tree Search~\cite{baier2013monte,baier2015mcts,baier2018mcts,lanctot2014monte,lin2017monte}.

Note that the labelings of states carried out by search algorithms
are generally stored in hash tables, called \emph{transposition tables}.

\section{Perfect Two-Player Games and Completeness }\label{games_and_completeness}

In this section, we are interested in formalizing zero-sum two-player
games with perfect information (games without hidden information or
chance where players take turns playing). Moreover, we detail the
values used by the completion technique and present the other notations
employed in this document.

\subsection{Perfect Two-Player Game}

We start by defining two-player games with perfect information:
\begin{defn}
A perfect two-player game is a tuple $\left(S,\actions,\joueur,\fbin\right)$
where 
\begin{itemize}
\item $\left(\etats,\actions\right)$ is a finite lower semilattice, 
\item $\joueur$ is a function from $\etats$ to $\left\{ 1,2\right\} $, 
\item $\fbin$ is a function from $\liste{s\in\etats}{\terminal\left(s\right)}$
to $\left\{ -1,0,1\right\} $, where 
\begin{itemize}
\item $\terminal$ is a predicate such that $\terminal\left(s\right)$ is
true if and only if $\left|\actions\left(s\right)\right|=0$ ; 
\item $\actions\left(s\right)$ is the set defined by $\liste{s'\in\etats}{\left(s,s'\right)\in\actions}$,
for all $s\in\etats$. 
\end{itemize}
\end{itemize}
\end{defn}

The set $\etats$ is the set of game states. $\actions$ encodes the
actions of the game: $\actions\left(s\right)$ is the set of states
reachable by an action from $s$. The function $\joueur$ indicates,
for each state, the number of the player whose turn it is, i.e. the
player who must play. The predicate $\terminal\left(s\right)$ indicates
if $s$ is a terminal state (i.e. an end-of-game state). Let $s\in\etats$
such that $\terminal\left(s\right)$ is true, the value $\fbin\left(s\right)$
is the payout for the first player in the terminal state $s$ (the
gain for the second player is $-\fbin\left(s\right)$ ; we are in
the context of zero-sum games). We have $\fbin\left(s\right)=1$ if
the first player wins, $\fbin\left(s\right)=0$ in the event of a
draw, and $\fbin\left(s\right)=-1$ if the first player loses.

The terminal evaluation $\fbin$ is often not very informative about
the quality of a game. A more expressive terminal evaluation function,
with values in $\mathbb{R}$, can be used to favor some games over
others, which can improve the level of play \cite{2020learning}.
We denote such functions by $\fterminal$. For example, in score games,
$\fterminal(s)$ may be the score of the endgame $s$ (see \cite{2020learning}
for other terminal evaluation functions). In order to guide the search,
it is necessary to be able to evaluate non-terminal states. To do
this, an evaluation function from $\etats$ to $\mathbb{R}$, denoted
by $\fadapt(s)$, is used. A good evaluation function $\fadapt(s)$
provides for example an approximation of the minimax value of $s$.
Such a function $\fadapt(s)$ can be determined by reinforcement learning,
using $\fterminal(s)$ as ``reinforcement heuristic'' \cite{2020learning}.

\subsection{State Values of Completion}\label{subsec:States-Values-of}

We now detail the foundations needed to understand the completion
technique. With the completion technique, each state $s$ of the partial
game tree constructed by the used search algorithm is associated with
three values. The first value, the completion value $c\left(s\right)$,
indicates the exact minimax value of $s$ compared to the classic
gain of the game (if it is not known, $c\left(s\right)=0$). The (exact)
minimax value of a state $s$ with respect to a certain terminal evaluation
function (such as the classic game gain) is the terminal evaluation
of the end-of-game state obtained by starting from $s$ where the
players play optimally (i.e. the first player maximizes the end-of-game
value and the second player minimizes this value ; each by having
complete knowledge of the full game tree). 

The second value, the heuristic evaluation $v\left(s\right)$, is
an estimate of the true minimax value of $s$ with respect to a certain
reinforcement heuristic (a terminal evaluation function, at least,
more expressive than the classic gain of the game). More precisely,
the heuristic evaluation $v\left(s\right)$ is the minimax value of
$s$ in the partial game tree where the terminal leaves of the tree
are evaluated by the reinforcement heuristic and the other leaves
by an adaptive evaluation function (learned, for example, by reinforcement
learning, to estimate the minimax value corresponding to the reinforcement
heuristic). 

Finally, the third value, the resolution value $r\left(s\right)$,
indicates whether the state $s$ is (weakly) resolved (case $r\left(s\right)=1$)
or not (case $r\left(s\right)=0$). A terminal state $s$ is resolved
(thus at any time $r\left(s\right)=1$). A non-terminal leaf $s$
is not resolved at the start of the algorithm (initially, $r\left(s\right)=0$).
Then, an internal (i.e., non-leaf) state is resolved if its best child
is a winning resolved state for the current player or if all of its
children are resolved ($c\left(s\right)$ is then exact). 

\subsection{Other notations}

We describe the other notations used in this article. First, $n(s,s')$
is the number of times a child state $s'$ is selected from a parent
state $s$ (has no impact on the completeness) ; $\Spartiel$ : set
of explored states by the used search algorithm, i.e. the partial
game tree. It is in practice the keys of the transposition table $T$,
associating a state $s$ with the values $v\left(s\right)$, $c\left(s\right)$,
and $r\left(s\right)$. $\top$: True, $\bot$: False.

\section{State-of-the-Art Algorithms }\label{state_of_the_art_algorithms}

\subsection{Unbounded Best-First Minimax with Completion }

Unbounded (Best-First) Minimax (noted more succinctly $\ubfm$) is
an algorithm which builds a (partial) tree of the game to decide which
is the best action to play given the current knowledge about the game
(i.e. given the partial tree of the game). The algorithm iteratively
builds the partial tree of the game by extending each time the best
sequence of unresolved states (in a state $s$ the first player (resp.
second player) chooses the state $s'$ which maximizes (resp. minimizes)
the lexicographically ordered pair $\left(c\left(s\right),v\left(s\right)\right)$).
In other words, it adds the child states of the principal variation
of the partial game tree deprived of resolved states. An iteration
of Unbounded Minimax is described in Algorithm~\ref{alg:ubfms_search}.
This iteration is performed (on a game state) as long as there is
some search time left ($\tau$: search time) or until this state is
resolved (and therefore its minimax value is determined, as we will
prove later). Note that to decide the action to play after the search,
the best action is chosen (see Algorithm~\ref{alg:descente-completion-suite}).

\begin{algorithm}[!bh]
\begin{centering}
\includegraphics[scale=0.5]{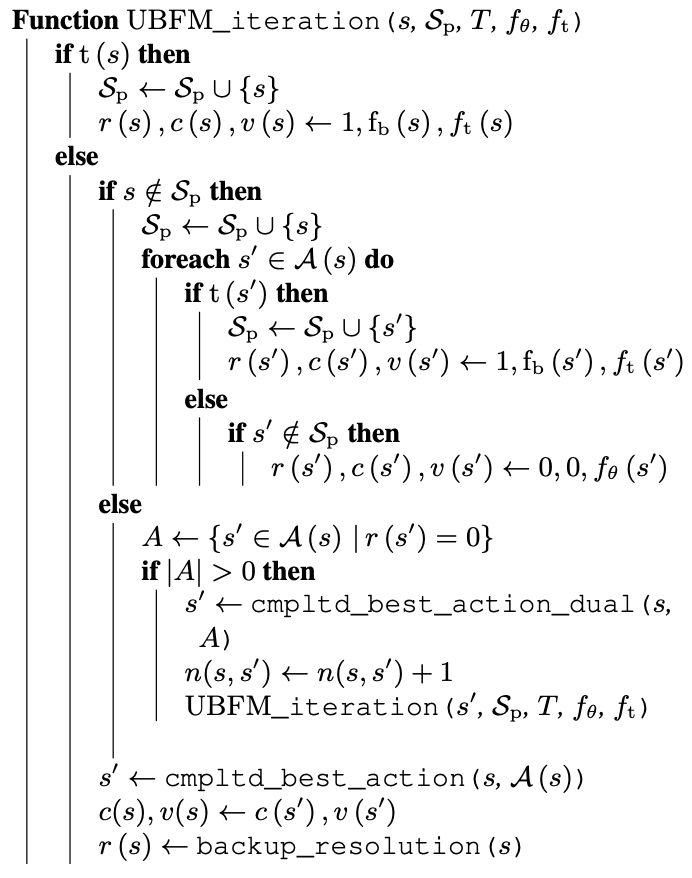}
\par\end{centering}
\caption{\emph{$\protect\ubfm$} tree search algorithm with completion (see
Algorithm~\ref{alg:descente-completion-suite} for the definitions
of completed\_best\_action($s$) and backup\_resolution($s$)). }\label{alg:ubfms_search}
\end{algorithm}

\subsection{Descent with Completion}

Descent Minimax is a variant of Unbounded Minimax. The key difference
is that it extends the best sequence of unresolved game states until
it reaches either a terminal or a resolved state. As a result, it
performs deterministic endgame simulations iteratively.

In contrast, Unbounded Minimax stops its iteration as soon as it extends
a new state. It therefore extends the current best sequence of actions
by only a single action at each iteration.

Descent Minimax explores the game tree in a different order, with
a much stronger depth-first bias. This exploration priority is particularly
interesting from a learning perspective~\cite{2020learning}. An
iteration of Descent Minimax is described in Algorithm~\ref{alg:descente-completion}.

\begin{algorithm}[!bh]
\begin{centering}
\includegraphics[scale=0.5]{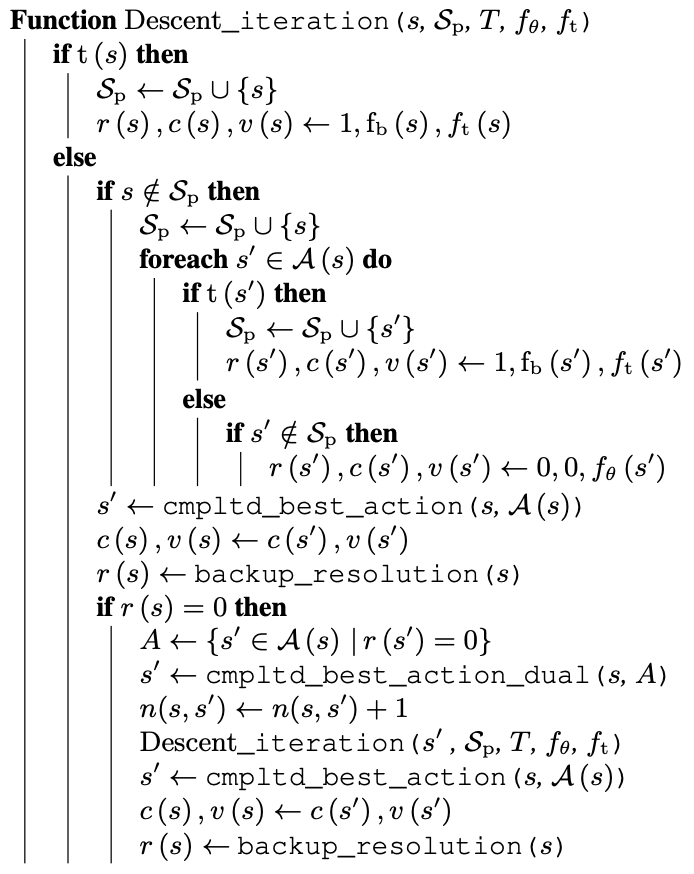}
\par\end{centering}
\caption{\emph{Descent Minimax} tree search algorithm with completion (see
Algorithm~\ref{alg:descente-completion-suite} for the definitions
of completed\_best\_action($s$) and backup\_resolution($s$)). }\label{alg:descente-completion}
\end{algorithm}

\begin{algorithm}[!bh]
\begin{centering}
\includegraphics[scale=0.5]{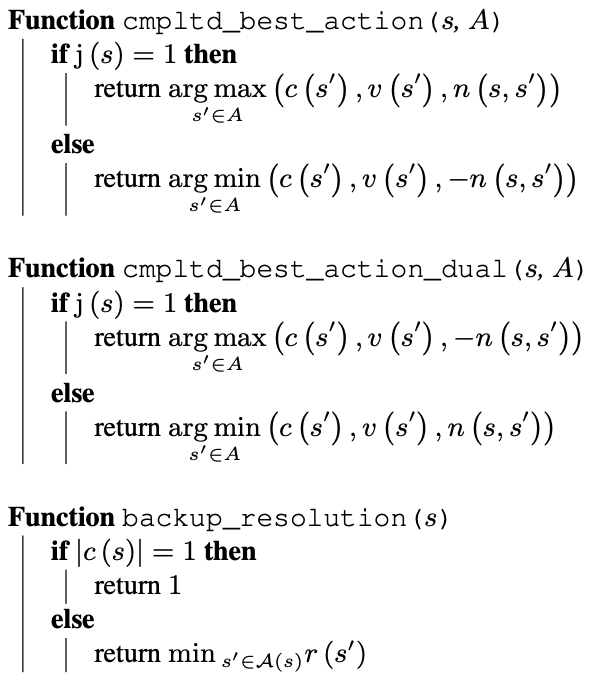}
\par\end{centering}
\caption{Definition of the algorithms completed\_best\_action($s$, $A$),
which computes the \emph{a priori }best child state by using completion
from a set of child states $A$, and backup\_resolution($s$), which
updates the resolution of $s$ from its child states. The methods
use lexicographic ordering.}\label{alg:descente-completion-suite}
\end{algorithm}

\section{Unbounded Minimax-based algorithms}\label{gum}

We now generalize the Descent (Minimax) and Unbounded (Best-First)
Minimax algorithms using the completion technique into a set of search
algorithms, preserving its main properties (they are minimax-like
algorithms operating without a depth bound). However, each algorithm
in this set explores the game tree differently. More precisely, this
class of algorithms groups together all the possibilities of behavior
with regard to the decision to continue the exploration in depth of
the line of play being currently analyzed or to return to the root
to possibly choose a new line of play. Unbounded Best-First Minimax
always returns to the root after having extended by only one state
the line of play and Descent returns to the root only when the extended
line of play reaches the end of the game. This class of algorithms
also includes all the possibilities of behavior for the order of exploration
of the different children of a state. Unbounded Best-First Minimax
and Descent always choose the best child.

We call this class of algorithms  \emph{Unbounded Minimax-based algorithms}.
Note that any Unbounded Minimax-based algorithm natively uses the
completion technique. Thus, this class includes the completion versions
of Descent and Unbounded Best-First Minimax. Recall that without the
completion technique, these two algorithms are not complete and we
show in the sequel that they are complete with the completion technique.

The formalization of an iteration of a Unbounded Minimax-based algorithm
is described in Algorithm \ref{alg:uum}. This iteration is performed
(on a game state) as long as there is some search time left ($\tau$:
search time) or until this state is resolved (and therefore its minimax
value is determined, as we will prove later). This “abstract” iteration
algorithm depends on two methods which must be specified in order
to instantiate the algorithm. The method $\continue$ determines if
the algorithm continues to expand the current line of play (i.e. to
explore more deeply the estimation of the rest of the match) or if
the algorithm considers a potentially new line of play starting from
the current state of the match (i.e. potentially considers another
estimate of the rest of the match). This method defines the order
of “deep exploration”. The method $\prioritychildcalculation(s)$
returns the child state of $s$ to be explored first. This second
method defines the order of “breadth exploration”.

A Unbounded Minimax-based algorithm builds a (partial) tree of the
game to decide which is the best action to play given the current
knowledge about the game (i.e. given the partial tree of the game).
Each state $s$ of the partial game tree is associated with the three
completion values (see Section~\ref{subsec:States-Values-of}), like
with Unbounded Best-First Minimax and Descent.

A Unbounded Minimax-based algorithm iteratively builds the partial
tree of the game by extending each time a sequence of unresolved states
(which is not necessarily the best sequence of unresolved states contrary
to what Descent and Unbounded Best-First Minimax do). Recall that
Unbounded Best-First Minimax extends the sequence of states by a single
state, which is the best child state. In contrast, Descent recursively
extends the sequence of states by the best child state until a terminal
state is reached. More generally, a Unbounded Minimax-based algorithm
recursively extends the sequence of states by one of the child states,
which is not necessarily the best child state, up to a certain state,
which is not necessarily a terminal state.

\begin{algorithm}[!th]
\begin{centering}
\includegraphics[scale=0.5]{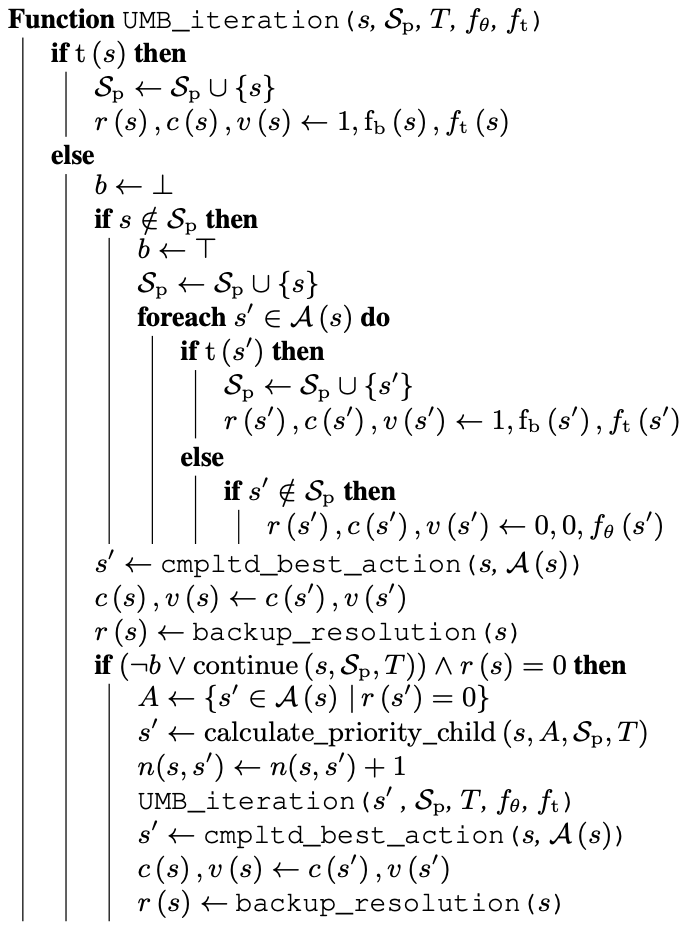}
\par\end{centering}
\caption{Iteration of a Unbounded Minimax-based algorithm (see Algorithm~\ref{alg:descente-completion-suite}
for the definitions of completed\_best\_action($s$) and backup\_resolution($s$)).
It is based on two abstract methods: $\protect\prioritychildcalculation(s,A,\protect\Spartiel,T)$
which must return one of the children of $s$ in $A$ depending on
the current search status (i.e. $\protect\Spartiel$ and $T$) and
$\protect\continue(s,\protect\Spartiel,T)$ which must return a boolean
that determines whether to continue to explore the current line of
play deeper or backtrack to the root to potentially explore a new
line of play, depending on the current search status (i.e. $\protect\Spartiel$
and $T$). Note: $b$ is True if $s$ has just been extended.}\label{alg:uum}
\end{algorithm}

\section{Completeness of Unbounded Minimax-Based Algorithms}\label{proofs}

We start by precisely formalizing what we mean by completeness in
the context of a Unbounded Minimax-based algorithm. For this, we define
the (exact) minimax value of a state. 
\begin{defn}
Let $\etats$ be a complete game tree.

The \emph{minimax value of a state $s\in\etats$ (compared to the
terminal evaluation $\fbin$)} is the value $M\left(s\right)$ recursively
defined by 
\[
M\left(s\right)=\begin{cases}
\max_{s'\in\actions\left(s\right)}M\left(s'\right) & \text{if }\lnot\terminal\left(s\right)\et\joueur\left(s\right)=\joueurUn\\
\min_{s'\in\actions\left(s\right)}M\left(s'\right) & \text{if }\lnot\terminal\left(s\right)\et\joueur\left(s\right)=\joueurDeux\\
\fbin\left(s\right) & \text{if }\terminal\left(s\right)
\end{cases}
\]
\end{defn}

We thus define completeness as the ability to compute the minimax
value (in finite time).
\begin{defn}
A Unbounded Minimax-based algorithm is \emph{complete} if for any
perfect two-player game $\left(\etats,\actions,\joueur,\fbin\right)$
and for any state $s\in\etats$, there exists $\tau\in\mathbb{N}^{+}$,
such that applying the algorithm on $s$, with $\tau$ as search time,
gives $r\left(s\right)=1$ and $c\left(s\right)=M\left(s\right)$. 
\end{defn}

We can now show that any Unbounded Minimax-based algorithm is complete.

Note first that if a state is marked as resolved (i.e. $r(s)=1$),
its values do not change:
\begin{lem}
\label{lem:Si-avant} Let $\hat{s}$ be a state of a perfect two-player
game $G$.

If $r\left(\hat{s}\right)=1$ before an iteration of a Unbounded Minimax-based
algorithm on a certain state $\hat{s}'$ of $G$ then after the iteration,
$r\left(\hat{s}\right)$ and $c\left(\hat{s}\right)$ have not changed. 
\end{lem}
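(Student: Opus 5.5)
We argue directly from the pseudocode of Algorithm~\ref{alg:uum}, together with the definitions of completed\_best\_action and backup\_resolution in Algorithm~\ref{alg:descente-completion-suite}. We locate every instruction of an iteration that writes to $r(\cdot)$ or $c(\cdot)$ and show that none of them can fire on a state that is already resolved. Since $\hat{s}$ is arbitrary and the iteration is a finite sequence of such writes, it suffices to show, by induction on the number of elementary steps of the iteration, that the invariant ``$r(\hat{s})=1$ and $c(\hat{s})$ equals its value before the iteration'' is preserved by each step.

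\textbf{Step 1: the writes.} We expect the values $r$ and $c$ to be written in only two places.
\begin{itemize}
\item \emph{Extension of a leaf.} When a leaf is expanded, its newly created children are initialised: terminal children get $r=1$ and $c=\fbin$, the others get $r=0$ and $c=0$. These children are new states, not in $\Spartiel$ before. So if $\hat{s}$ was resolved, it was already in $\Spartiel$ and is not reinitialised. If transpositions mean a child may already be in $T$, we will check that the pseudocode only initialises states absent from $T$.
\item \emph{backup\_resolution$(s)$.} This is called on the states of the current line of play while backtracking. Here we would check in Algorithm~\ref{alg:descente-completion-suite} that the update is guarded: either it is only performed when $r(s)=0$, or it recomputes $c(s)$ and $r(s)$ from the children.
\end{itemize}

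\textbf{Step 2: why the descent never touches a resolved state.} The line of play only selects children among unresolved states: the set $A$ passed to $\prioritychildcalculation$ and completed\_best\_action excludes resolved children. Also, the iteration on $\hat{s}'$ only runs when $r(\hat{s}')=0$. Hence every state on the explored line, at the time it is chosen, is unresolved. A resolved $\hat{s}$ is therefore never extended and never receives a call to backup\_resolution. It can only be read, as a child, when computing its parent's values, and reading does not modify it.

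\textbf{Main obstacle.} The delicate point is transpositions: in a semilattice, a state may occur as a child of several explored states. We must make sure that no state on the current line coincides with a resolved $\hat{s}$ that got resolved earlier during the same iteration, and so might be backed up again. We would handle this through the ordering of events: states are added to the line only when unresolved, and backup happens in reverse order along the line. A state on the line is therefore backed up once, when it is first resolved; any later backup would only concern ancestors, which are different states because the game graph is acyclic (it is a finite lower semilattice). If the pseudocode instead recomputes values unconditionally, the fallback is a monotonicity argument. Once a state is resolved, all children used in its resolution are resolved, and by induction on height these are unchanged. So recomputing $c$ via max/min over the same resolved children, or the winning resolved child, yields the same $c$ and keeps $r=1$.
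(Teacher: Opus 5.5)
Your overall strategy matches the paper's proof: enumerate the write sites, use that child initialisation only touches states outside $\Spartiel$ while a resolved state always lies in $\Spartiel$, and use that the recursion only enters states of $A=\{s''\in\actions(s)\mid r(s'')=0\}$. The gap is the assumption in Step 2 that an iteration is only launched when $r(\hat{s}')=0$. The lemma imposes no such restriction, and it is needed without it: the proof of Proposition~\ref{prop:arret} applies the algorithm to a state that may already satisfy $r(s)=1$.

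The case $\hat{s}=\hat{s}'$ is exactly the nontrivial case in the paper's proof. Here $\hat{s}\in\Spartiel$, so it is not extended. Since $r(\hat{s})=1$, the third if block (where $A$ is computed and the recursion happens) is skipped. But the values of $\hat{s}$ are still recomputed from its children just before that block. One must then argue two things:
\begin{itemize}
\item before the iteration, either all children of $\hat{s}$ are resolved, or $|c(\hat{s})|=1$ with a resolved child $\tilde{s}$ satisfying $c(\tilde{s})=c(\hat{s})$;
\item these children's values are unchanged (by induction), so the recomputation returns the same $c(\hat{s})$ and $r(\hat{s})=1$.
\end{itemize}
That is precisely your ``fallback'' monotonicity argument, so you have the right tool. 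It has to be applied to the root, not kept as a contingency, because your main line excludes this case by an assumption that is not in the statement.

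There are also two smaller points. First, terminal states are not protected by being always fresh. Their values may be written again: the paper identifies two assignment sites for terminal states, and it claims the $\Spartiel$-guard only for non-terminal children. What protects them is that every such write assigns the same constants $r=1$, $c=\fbin(s)$. Second, the fact that $r(s)=1$ implies $s\in\Spartiel$ needs a short check of every place where $r$ is set to $1$, as the paper does.

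Your ``main obstacle'' paragraph is not needed. States that become resolved during the iteration are outside the lemma, whose hypothesis is $r(\hat{s})=1$ before the iteration. So no argument about the order of backups along the line is required here.
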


Note now that a state $s$ is marked as resolved (i.e. $r(s)=1$)
if and only if all of its children are marked as resolved (any child
state $s'$ satisfies $r(s')=1$) or $s$ is marked as winning for
the first player ($c(s)=1$) or it is marked as losing for the first
player ($c(s)=-1$):
\begin{lem}
\label{lem:completion-resolution}Let $\left(\Spartiel,\actions\right)$
be a game tree built by a Unbounded Minimax-based algorithm from a
certain state. Let $s\in\Spartiel$. We have the following properties: 
\begin{itemize}
\item If $r\left(s\right)=1$, then either $\left|c\left(s\right)\right|=1$
or for all $s'\in\actions\left(s\right)$, $r\left(s'\right)=1$ ; 
\item If $\left|c\left(s\right)\right|=1$, then $r\left(s\right)=1$ ; 
\item If for all $s'\in\actions\left(s\right)$, $r\left(s'\right)=1$,
then $r\left(s\right)=1$. 
\end{itemize}
\end{lem}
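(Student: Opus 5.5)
We argue by induction on the number of iterations performed by the Unbounded Minimax-based algorithm (Algorithm~\ref{alg:uum}). We show that after every iteration, and indeed after every individual update of the transposition table $T$, the three properties hold simultaneously for every $s\in\Spartiel$. The invariant holds trivially at initialization. The partial tree then contains only the root. If the root is terminal, the algorithm sets $r=1$ and $c=\fbin$, so all three properties hold, since $\actions(s)=\emptyset$ makes the first and third vacuous or immediate. If the root is not terminal, then $r=0$ and $c=0$, and it has no explored children yet.

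For the inductive step, we inspect where $r$ and $c$ are modified. This happens in only two places. The first is when a state is extended: each new child $s'$ gets $r(s')=1$ and $c(s')=\fbin(s')$ if it is terminal, and $r(s')=0$, $c(s')=0$ otherwise. The second is the call backup\_resolution$(s)$ of Algorithm~\ref{alg:descente-completion-suite}, performed on the extended state and then on each ancestor along the current line of play during the backtrack. For newly created leaves the invariant is checked directly; note that $|c|=1$ together with $r=1$ is consistent. The core of the proof is that backup\_resolution sets, using the lexicographic best child $s^*$ under $(c,v)$ for $\joueur(s)$, the value $c(s)\leftarrow c(s^*)$. It also sets $r(s)\leftarrow 1$ exactly when either $s^*$ is resolved with $c(s^*)$ winning for $\joueur(s)$ (hence $|c(s)|=1$), or all children are resolved. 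In the latter case we must also argue that if $c(s)=\pm1$ is written, then $r(s)=1$. This follows because $c$ takes values in $\{-1,0,1\}$ and is nonzero only on resolved states (by the induction hypothesis applied to the children). Consequently, a best child with $c=\pm1$ is either winning for the mover, which gives resolution, or losing for the mover. In the losing case, by the lexicographic choice, every child has $c$ equal to that losing value, so every child is resolved and $s$ is resolved. This case analysis yields the first two properties for $s$; the third is immediate from the all-children-resolved branch.

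The main obstacle is that an update at one state can invalidate the invariant at its parent, for example by making a child newly resolved, so that the third property fails for the parent until the parent is itself backed up. I would handle this by exploiting the structure of Algorithm~\ref{alg:uum}. A state's values change only on the current line of play, and backup\_resolution is called bottom-up on every state of that line before the iteration ends. Hence the invariant is claimed at iteration boundaries, and any temporarily violated parent is exactly a later state in the backup sequence. States off the line of play keep their children's values unchanged. Lemma~\ref{lem:Si-avant} is used here: it guarantees that already resolved states, including children shared through transpositions, never change, so resolved parents elsewhere in $\Spartiel$ stay consistent. If the lower semilattice structure allows a changed state to have a parent outside the current line, I would restrict attention to the tree rooted at the searched state as the statement specifies, and rely on the fact that only unresolved states change. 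Such a change can never falsify the first or second property, which only constrain resolved states and states with $|c|=1$ (the latter resolved by induction). The remaining gap, the third property for such an off-line parent, is what I would check most carefully against the pseudocode.
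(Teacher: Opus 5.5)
Your route is the paper's route. The paper's proof is a one-line appeal to the definition and placement of backup\_resolution plus Lemma~\ref{lem:Si-avant}. Your case analysis of backup\_resolution is exactly what that sentence leaves implicit, in particular the point that a best child whose value $\pm1$ is losing for the mover forces every child to carry that value, hence (by the second bullet for the children) to be resolved. It shows that the first two bullets are preserved by every initialisation and every call of backup\_resolution, with or without transpositions. When $(\Spartiel,\actions)$ really is a tree, your bottom-up argument also gives the third bullet at iteration boundaries. Every state whose children change lies on the current line (a freshly initialised child's parent is the just-extended state), and the whole line is backed up before the iteration ends.

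The case you leave open, however, cannot be settled by a closer reading of the pseudocode: with transpositions the third bullet is false. Let $R$ (player $1$) have children $P,Q$ (player $2$), with $\actions(P)=\{X,B\}$, $\actions(Q)=\{B\}$, $\actions(B)=\{T\}$, where $X,T$ are terminal and $\fbin(X)=\fbin(T)=0$. After the first iteration, run three iterations:
\begin{itemize}
\item $R\to P$, extending $P$. Here $X$ is resolved and $B$ is initialised with $r(B)=0$, so $r(P)=0$.
\item $R\to Q$, extending $Q$.
\item $R\to Q\to B$, extending $B$.
\end{itemize}
This is what UBFM itself does when $\fadapt(P)>\fadapt(Q)>\fadapt(B)>\fterminal(X)$. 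The last iteration resolves $B$ and then $Q$, but backup\_resolution is only called on $B$, $Q$ and $R$. Afterwards $P\in\Spartiel$ has $r(X)=r(B)=1$ and still $r(P)=0$. This is repaired only the next time $P$ is visited, by the backup executed just before the third if block.

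So no argument can close that case, and the paper's one-line proof does not address it either. To get a correct statement, either commit to the tree reading suggested by the words ``game tree'' (under which your proof is complete), or weaken the third bullet to hold for a state right after its own backup\_resolution. That weaker form is what Lemma~\ref{lem:arret} uses when it argues that $A$ is never empty. Proposition~\ref{prop:exact} only relies on the first two bullets, which your argument establishes in full generality.
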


We now show that if a state $s$ is marked as resolved ($r(s)=1$)
then it is resolved, i.e. its completion value is equal to its minimax
value ($c\left(s\right)=M\left(s\right)$):
\begin{prop}
\label{prop:exact}Let $\left(\Spartiel,\actions\right)$ be a game
tree built by a Unbounded Minimax-based algorithm from a certain state
and let $s\in\Spartiel$.

If $r\left(s\right)=1$, then $c\left(s\right)=M\left(s\right)$. 
\end{prop}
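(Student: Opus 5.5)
We prove the statement as an invariant of the search. We show that the property ``every $s\in\Spartiel$ with $r\left(s\right)=1$ satisfies $c\left(s\right)=M\left(s\right)$'' holds initially and is preserved by every elementary update of Algorithm~\ref{alg:uum}. The induction is on the number of elementary updates performed (extensions of a state and calls to backup\_resolution), not on the depth of the tree. This matters because $\left(\etats,\actions\right)$ is only a lattice, so a state may be reached along several paths through the transposition table.

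Initially the only state is the root. Its resolution value is $1$ only if it is terminal, in which case $c\left(s\right)=\fbin\left(s\right)=M\left(s\right)$ by definition of $M$. When a state is extended, its children are added. Each terminal child receives $r=1$ and $c=\fbin=M$, and each non-terminal child receives $r=0$, so the invariant is preserved. By Lemma~\ref{lem:Si-avant}, an already resolved state never has its $r$ or $c$ modified afterwards. Hence the only dangerous step is the one in which backup\_resolution$(s)$ switches $r\left(s\right)$ from $0$ to $1$, and we must show that the value it assigns to $c\left(s\right)$ equals $M\left(s\right)$.

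For this step we use the definition of backup\_resolution (Algorithm~\ref{alg:descente-completion-suite}) together with Lemma~\ref{lem:completion-resolution}, and split into two cases. Assume without loss of generality that $\joueur\left(s\right)=\joueurUn$; the other case is symmetric with min in place of max.

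\emph{Case 1: all children are resolved.} By the induction hypothesis every child $s'$ satisfies $c\left(s'\right)=M\left(s'\right)$. Backup sets $c\left(s\right)=\max_{s'\in\actions\left(s\right)}c\left(s'\right)$. Here we must check that the maximum is taken over all of $\actions\left(s\right)$, which holds because an extended state has all its children in $\Spartiel$. Therefore $c\left(s\right)=\max_{s'}M\left(s'\right)=M\left(s\right)$.

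\emph{Case 2: not all children are resolved.} Then, by the first item of Lemma~\ref{lem:completion-resolution}, $\left|c\left(s\right)\right|=1$. This arises from a resolved child $s'$ with $c\left(s'\right)=1$ that is the lexicographic best child for player $\joueurUn$, so that $c\left(s\right)=1$. By the induction hypothesis $M\left(s'\right)=1$, and since $M$ takes values in $\left\{ -1,0,1\right\}$ we get $M\left(s\right)\geq M\left(s'\right)=1$, hence $M\left(s\right)=1=c\left(s\right)$.

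The main obstacle is justifying Case~2 from the algorithm's pseudo-code. We must make sure that backup can only declare $s$ resolved with $c\left(s\right)=1$ (for player~$\joueurUn$) when the witnessing child is itself resolved. In particular, $c\left(s\right)=-1$ must never arise while some child is still unresolved. Since unresolved states have $c=0$ (they have not yet been assigned an exact value), the best child under the lexicographic order $\left(c,v\right)$ can have $c=-1$ only if every child has $c=-1$. A child with $c\neq0$ has $\left|c\right|=1$, so by the second item of Lemma~\ref{lem:completion-resolution} it is resolved; thus in that situation all children are resolved and we are back in Case~1. We would make this explicit by proving, alongside the main invariant, that every state with $r=0$ has $c=0$.
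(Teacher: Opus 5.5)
Your proof is correct and is essentially the paper's argument. It uses the same split into ``all children resolved'' versus $|c(s)|=1$, the same exclusion of $c(s)=-1$ at a max node (every child would then have $c=-1$ and hence be resolved), and the same winning-child argument; the only difference is that you induct over the chronological sequence of updates and examine $c(s)$ at the moment $r(s)$ flips, whereas the paper inducts over children and refers back to ``the iteration that calculated $c(s)$'' via Lemma~\ref{lem:Si-avant}. One small correction: non-terminal children of an extended state get values in $T$ but enter $\Spartiel$ only when they are themselves extended, so justify Case~1 by noting that every child has been assigned $(c,r)$ and that resolved states always lie in $\Spartiel$, rather than by claiming all children are in $\Spartiel$.
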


In the lemma below, we show under which sufficient conditions an iteration
of a Unbounded Minimax-based algorithm terminates.
\begin{lem}
\label{lem:arret}Let $\algo$ be a Unbounded Minimax-based algorithm
and $s'$ be a non-terminal state of a perfect two-player game such
that $r(s')=0$.

An iteration of the algorithm $\algo$ on the state $s'$ ends in
a state $s$ such that either $s\notin\Spartiel$ before the iteration
and $s\in\Spartiel$ after the iteration or $r(s)=0$ before the iteration
and $r(s)=1$ after the iteration.
\end{lem}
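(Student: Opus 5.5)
The argument follows the recursive descent performed by the iteration of Algorithm~\ref{alg:uum} started at $s'$. Since the actual pseudocode is only in the figure, I rely on its described behaviour. If the current state is not yet in $\Spartiel$ (or has no evaluated children), it is extended: its children are added and evaluated, $b$ is set to True, and the recursion stops after the backup. Otherwise the algorithm calls $\prioritychildcalculation$ on the set $A$ of \emph{unresolved} children, recurses into the returned child if $\continue$ allows it, and then applies backup\_resolution on the way back.

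The first step is to show that the descent only visits states $s$ with $r(s)=0$. This holds for $s'$ by hypothesis. For an internal visited state, the third item of Lemma~\ref{lem:completion-resolution} applied to an unresolved state shows that it has at least one child with $r=0$, so $A\neq\emptyset$ and the chosen child is again unresolved. Since it is unresolved it is non-terminal, because terminal states have $r=1$.

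The second step is termination of the descent. The game is a finite lower semilattice, i.e.\ a finite acyclic graph, so every path of visited states strictly descends and has bounded length. The descent therefore reaches, after finitely many steps, a state $s$ at which it stops. There are only two ways to stop:
\begin{itemize}
\item $s\notin\Spartiel$, or $s$ is an unexpanded leaf: the iteration extends it, so $s$ enters $\Spartiel$, and we are in the first alternative of the statement. I will make this precise by reading "extension" as adding the node to the tree together with its children.
\item $\continue$ returns $\bot$ at an internal unresolved state $s$: then the iteration ends at $s$ with $s\in\Spartiel$ already.
\end{itemize}

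The hard case is the second one, where no new node is apparently added. If the algorithm has the structure "continue only matters after an extension" (the flag $b$: backtracking is allowed only once $s$ has just been extended), this case cannot occur without an extension. So I will argue from the pseudocode that a non-extended internal node always recurses further, i.e.\ $\continue$ is only consulted when $b$ is True. Then the descent stops only at an extension, or when the recursive call returns and backup\_resolution flips $r(s)$ from $0$ to $1$. I would treat this last situation as the second alternative: the ending state is the deepest state whose resolution changed. This change is possible by the first two items of Lemma~\ref{lem:completion-resolution}, since after the recursion either all children are resolved or the completion of the best child makes $|c(s)|=1$.

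The final step assembles these facts. Every iteration ends either by adding a new state to $\Spartiel$ or by turning some $r(s)$ from $0$ to $1$. By Lemma~\ref{lem:Si-avant} the second kind of change is never undone, which is what the lemma is later used for (a finiteness count). The main obstacle is pinning down from the figure exactly when $\continue$ is consulted, so that no iteration stops with neither an extension nor a resolution change.
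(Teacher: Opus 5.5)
Your Step~1 contains the two facts the paper actually proves. The ending state is $s'$ or an element of some set $A$, hence unresolved before the iteration. It is also non-terminal, since terminal children always carry $r=1$. The gap is in the point the lemma hinges on, namely how the descent can stop: you leave it as a plan, and the version you commit to is wrong.

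The paper's proof starts from an observation read off Algorithm~\ref{alg:uum}:
the current state $s$ is backed up just before the third if block; that block (where $A$ is computed and the recursive call is made) is entered only when $r(s)=0$; and $\continue$ is consulted only when $b$ is true, which you guessed correctly. Hence an iteration can only end at a state that is terminal, or has $r(s)=1$ after this backup, or has $b$ true. Your claim that a non-extended internal node always recurses further drops the guard $r(s)=0$. With it you lose the only way an iteration can end at a state that was not just extended.

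That case is not vacuous; it is exactly what the second alternative of the lemma is for. A state can have several parents, so the children of an already expanded state $s$ can become resolved during earlier iterations through other parents: all of them, or one winning for the player to move. Meanwhile $r(s)$ is recomputed only when $s$ itself is visited. When the descent later reaches $s$, the backup turns $r(s)$ from $0$ to $1$, the third if block is skipped, and the iteration ends at $s$ without $s$ being extended.

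Your substitute, a flip of $r(s)$ by backup\_resolution after a recursive call returns, concerns a state from which the descent went further. That state is not where the iteration ends, so renaming the ending state ``the deepest state whose resolution changed'' no longer matches the statement.

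For the same reason, $A\neq\emptyset$ should come, as in the paper, from the backup performed just before $A$ is computed. If $\hat s$ has no unresolved child left, $r(\hat s)$ has just been set to $1$ and $A$ is never computed. It should not come from the third item of Lemma~\ref{lem:completion-resolution}, which describes the tree between iterations. That lemma does not cover a freshly extended state, whose children may all be terminal and which must then stop rather than recurse.

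With this guard made explicit, the rest of your argument goes through, including the acyclicity argument for termination that the paper leaves implicit.
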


We now show that all Unbounded Minimax-based algorithm mark any game
state as resolved after a finite number of iterations:
\begin{prop}
\label{prop:arret} Let $\algo$ be a Unbounded Minimax-based algorithm,
$\etats$ be the set of states of a perfect two-player game, and $s\in\etats$.

There exists $N\in\mathbb{N}$ such that after applying $N$ times
the algorithm $\algo$ on the state $s$, we have $r\left(s\right)=1$. 
\end{prop}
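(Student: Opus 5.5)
We argue by a potential (progress measure) argument that combines Lemma~\ref{lem:arret} and Lemma~\ref{lem:Si-avant}. The idea is to attach to the search state the quantity
\[
\Phi=\left|\Spartiel\right|+\left|\liste{s''\in\Spartiel}{r\left(s''\right)=1}\right|,
\]
which is bounded above by $2\left|\etats\right|$ because $\etats$ is finite, the game being a finite lower semilattice. We will show that every iteration performed on $s$ while $r(s)=0$ strictly increases $\Phi$. From this it follows that at most $2\left|\etats\right|$ iterations can occur with $r\left(s\right)=0$, so $N=2\left|\etats\right|+1$ (or any such bound) works.

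We first dispose of the trivial cases. If $s$ is terminal, then $r\left(s\right)=1$ at all times, so $N=0$. Once $r\left(s\right)=1$ holds at some point, it stays true by Lemma~\ref{lem:Si-avant}, since a resolved state keeps its $r$ and $c$ values. Hence it suffices to bound the number of iterations applied while $s$ is non-terminal and $r\left(s\right)=0$. Lemma~\ref{lem:arret} applies exactly in that situation.

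Next we establish monotonicity of both terms of $\Phi$. The explored set $\Spartiel$ only grows during an iteration, because the algorithm adds states to the transposition table and never removes them. For any state that is marked resolved before an iteration, Lemma~\ref{lem:Si-avant} guarantees it is still marked resolved afterwards. So neither term of $\Phi$ ever decreases.

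Lemma~\ref{lem:arret} then gives strict increase. Each iteration on $s$ ends in a state $s''$ that is either newly added to $\Spartiel$, which raises the first term, or newly marked resolved, which raises the second term. In both cases $\Phi$ increases by at least one. Since $\Phi\le2\left|\etats\right|$, after at most $2\left|\etats\right|$ such iterations $r\left(s\right)=0$ can no longer hold, which gives the claim.

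The main obstacle is the bookkeeping: we must confirm that no step of Algorithm~\ref{alg:uum} removes states from $\Spartiel$ or resets $r$ of some state to $0$. The second point is exactly Lemma~\ref{lem:Si-avant}. The first follows from inspecting the algorithm, which only inserts entries into $T$. A subtle point is that a state newly marked resolved in Lemma~\ref{lem:arret} might also have been newly added during the same iteration. This causes no difficulty, because we only need $\Phi$ to increase by at least one.
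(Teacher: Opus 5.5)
Your proof is correct and takes the same route as the paper. The paper likewise uses Lemma~\ref{lem:arret} to show that each iteration on an unresolved non-terminal root either adds a state to $\Spartiel$ or newly marks one resolved, and uses the finiteness of $\etats$ to obtain the bound $2\left|\etats\right|$. Your explicit potential $\Phi$, with its monotonicity justified by Lemma~\ref{lem:Si-avant} and by the fact that $\Spartiel$ never shrinks, makes rigorous the counting that the paper states only informally.
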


From the previous Lemmas and Propositions, we conclude to the completeness
of Unbounded Minimax-based algorithms:
\begin{thm}
Any Unbounded Minimax-based algorithm is complete. 
\end{thm}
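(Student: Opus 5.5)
The theorem follows by combining Proposition~\ref{prop:arret} with Proposition~\ref{prop:exact}. Fix an arbitrary Unbounded Minimax-based algorithm $\algo$, a perfect two-player game $\left(\etats,\actions,\joueur,\fbin\right)$ and a state $s\in\etats$. By Proposition~\ref{prop:arret}, there is $N\in\mathbb{N}$ such that after $N$ iterations of $\algo$ on $s$ we have $r\left(s\right)=1$. The plan is to take the search time $\tau$ large enough that the search loop actually performs these $N$ iterations, or stops earlier because $s$ becomes resolved. Then we apply Proposition~\ref{prop:exact} to the final partial tree $\left(\Spartiel,\actions\right)$, which contains $s$ as its root, and conclude $c\left(s\right)=M\left(s\right)$.

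The first step is to dispose of the degenerate case in which $s$ is terminal. Then $r\left(s\right)=1$ holds at all times by convention, and $c\left(s\right)=\fbin\left(s\right)=M\left(s\right)$ either directly or via Proposition~\ref{prop:exact}, so any $\tau\geq1$ works.

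The second step handles the general case. Each iteration is a finite computation: by Lemma~\ref{lem:arret}, every iteration started while $r\left(s\right)=0$ terminates, either after adding a new state to $\Spartiel$ or after newly marking a state as resolved. Hence the first $N$ iterations together take some finite amount of time, and choosing $\tau\in\mathbb{N}^{+}$ at least that amount ensures the loop runs until $r\left(s\right)=1$. The loop stops as soon as $s$ is resolved, so fewer than $N$ iterations may be executed; this causes no problem. Once $r\left(s\right)=1$, Lemma~\ref{lem:Si-avant} guarantees that $r\left(s\right)$ and $c\left(s\right)$ no longer change, so the values returned at the end are those at the moment of resolution.

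The main obstacle is interpreting ``search time'' so that it corresponds to a bounded number of iterations, that is, ensuring each iteration consumes a bounded positive amount of the time budget. If $\tau$ is read as an iteration count, the argument is immediate with $\tau=\max(N,1)$. If $\tau$ is read as wall-clock time, I would bound the cost of each iteration by the finite size of $\etats$: each descent follows a path in a finite tree and performs finitely many backups. Summing over the $N$ iterations then gives a finite total, which can be taken as $\tau$.
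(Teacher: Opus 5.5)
Your proposal is correct and follows the paper's proof, which simply chains Proposition~\ref{prop:arret} (finitely many iterations yield $r(s)=1$) with Proposition~\ref{prop:exact} ($r(s)=1$ implies $c(s)=M(s)$). The extra points you add make explicit what the paper leaves implicit: the terminal case, and turning the iteration bound into a search time $\tau$ (using Lemma~\ref{lem:arret} for termination of each iteration and Lemma~\ref{lem:Si-avant} for stability after resolution).
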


\section{Experimental Comparison}\label{sec:Experimental-Comparison}

{\small{}
\begin{table*}[!t]
\begin{centering}
{\small\centering{} \centering{}\caption{Detailed performance of Unbounded Best-First Minimax with completion
against Unbounded Best-First Minimax without completion.}\label{tab:Detailed-performance-of}
}{\small\par}
\par\end{centering}
\centering{}{\tiny{}%
\begin{tabular}{ccccccccccc}
\hline 
{\tiny{}Amazons} & {\tiny{}Arimaa} & {\tiny{}Ataxx} & {\tiny{}Breakthrough} & {\tiny{}Brazilian} & {\tiny{}Canadian} & {\tiny{}Clobber} & {\tiny{}Connect6} & {\tiny{}International} & {\tiny{}Chess} & {\tiny{}Go 9}\tabularnewline
\hline 
{\tiny 1\% \textpm 2\%} & {\tiny{} 7\% \textpm 1\%   } & {\tiny{} 55\% \textpm 2\%  } & {\tiny{} -1\% \textpm 1\%  } & {\tiny{} 2\% \textpm 0\%   } & {\tiny{} 6\% \textpm 1\%   } & {\tiny{} 0\% \textpm 1\%   } & {\tiny{} 0\% \textpm 2\%   } & {\tiny{} 4\% \textpm 0\%   } & {\tiny{} 1\% \textpm 1\%   } & {\tiny{} 0\% \textpm 2\%   }\tabularnewline
\hline 
{\tiny{}Gomoku} & {\tiny{}Havannah} & {\tiny{}Hex 11} & {\tiny{}Hex 13} & {\tiny{}Hex 19} & {\tiny{}Lines of A.} & {\tiny{}Othello 10} & {\tiny{}Othello 8} & {\tiny{}Santorini} & {\tiny{}Surakarta} & {\tiny{}Xiangqi}\tabularnewline
\hline 
{\tiny 0\% \textpm 1\%   } & {\tiny{} 0\% \textpm 2\%   } & {\tiny{} 0\% \textpm 1\%   } & {\tiny{} 0\% \textpm 1\%   } & {\tiny{} 0\% \textpm 1\%   } & {\tiny{} 5\% \textpm 1\%   } & {\tiny{} 24\% \textpm 1\%  } & {\tiny{} 28\% \textpm 1\%  } & {\tiny{} 14\% \textpm 2\%  } & {\tiny{} 4\% \textpm 1\%   } & {\tiny 11\% \textpm 1\% }\tabularnewline
\end{tabular}}{\tiny\par}
\end{table*}
}{\small\par}

We conducted an empirical evaluation of the Unbounded Best-First Minimax
algorithm with and without  completion. We will show that its use
has an impact on performance and that this impact is positive and
more precisely significantly positive, using standardized benchmarks
across several two-player games.

\subsection{Experimental Protocol}

We apply the experimental protocol for evaluating search algorithms
from the article~\cite{cohen2025study}. We use the same games, the
same evaluation functions, the same procedures, the same computers,
etc. All the details of this experiment are therefore available in
\cite{cohen2025study}.

Thus, we evaluated the two algorithms in the context of the 22 deterministic,
perfect-information, zero-sum games, namely Amazons, Ataxx, Breakthrough,
Brazilian Draughts, Canadian Draughts, Clobber, Connect6, International
Draughts, Chess, Go $9\times9$, (Outer-Open-)Gomoku, Havannah, Hex
$11\times11$, Hex $13\times13,$ Hex $19\times19$, Lines of Action,
Othello $10\times10$, Othello $8\times8$, Santorini, Surakarta and,
Xiangqi, and Arimaa. They are all recurrent games at the Computer
Olympiad, the global artificial intelligence board game competition.

For each game, six distinct sets of evaluation functions are used,
each containing approximately fifteen functions. Evaluation functions
are learned by reinforcement using the Athénan algorithm. The evaluation
process is repeated for each of these six sets. During each repetition,
the two algorithms compete against each other (once as the first player
and once as the second) for each pair of evaluation functions in the
current set. This results in roughly 450 matches per repetition, totaling
about 2,700 matches per game across all six repetitions. Each search
algorithm uses 10 seconds of search time per action (which is of the
same order of magnitude as times in competition).

All algorithms use transposition tables and the safe decision technique
(the action selected to play after performing the search by the algorithm
is the most frequently selected action at the end of the search)~\cite{cohen2025study}.

To assess the performance of a search algorithm, match outcomes are
aggregated using the following scoring: 1 for a win, -1 for a loss,
and 0 for a draw. The final performance on a game is the average score
across all matches played by the algorithm in that game. We report
95\% confidence intervals for these values, along with an overall
average performance across all games. In addition, we provide an unbiased
global confidence interval using stratified bootstrapping at the 95\%
level.

See also Technical Details of the Appendix.

\subsection{Results}

We thus evaluate Unbounded Minimax without completion against Unbounded
Minimax with completion. On average across all games, using completion
improves performance. The score with completion is $6.34\%\in\left[6.03\%,6.66\%\right]$
and the score without completion is $-6.34\%$. Detailed performances
are described in Table~\ref{tab:Detailed-performance-of}. These
results indicate that Completion has a strong positive impact on the
algorithm's behavior.

\section{Conclusion}

This article focuses on search algorithms for games, whose objective
is to determine the best possible strategy, and ideally a winning
strategy. Although determining a winning strategy for a game is in
general impractical due to the combinatorial explosion, it remains
an essential problem. Indeed, in the late game, the combinatorics
is reduced, which makes it possible to determine a winning strategy.
It is therefore essential that a search algorithm for games be able
to determine a good approximation of a winning strategy in the early
game, and be able to determine a winning strategy in the late game.
Otherwise, an algorithm that provides a worse approximation of a winning
strategy in the early game, but which would be able to determine a
winning strategy in the late game, would be able to regain the advantage
and win the game. The desired property is therefore that a search
algorithm is always able to determine a winning strategy (when it
exists) if it is given enough computation time. Such search algorithms
are called complete.

In this article, we have in particular studied the algorithms Unbounded
Best-First Minimax and Descent Minimax extended by the so-called completion
technique. These two algorithms are parts of Athénan, the state-of-the-art
of game playing and learning without knowledge in two-player games
with perfect information, being in particular the leading algorithm
in international game competitions, by a very wide margin. The completion
technique was proposed to improve the two algorithms, because the
latter two are not complete: they do not always determine a winning
strategy when it exists. The question then arose whether the completion
technique makes the two algorithms complete. If this was not the case,
then the two algorithms would have had to be further improved. This
article answers this problem. 

For that, we have generalized the two algorithms Unbounded Best-First
Minimax and Descent using the completion technique within Unbounded
Minimax-based algorithms. This class of algorithms includes all the
possibilities of behavior with regard to the decision to continue
the exploration in depth of the current line of play or to return
to the root to possibly choose a new strategy. Unbounded Best-First
Minimax always returns to the root and Descent returns to the root
only when the line of play reaches the end of the game. This class
of algorithms also includes all the possibilities of behavior for
the order of exploration of the different children of a state. Unbounded
Best-First Minimax and Descent always choose the best child.

Then, we have proved, in a unique proof, that any Unbounded Minimax-based
algorithm is complete, i.e. with sufficient time and memory, they
allow one to (weakly) solve perfect-information two-player games and
therefore to determine for these games a best strategy (a winning
strategy if any exists or otherwise a draw strategy if any exists).
Thus, Unbounded Best-First Minimax and Descent using the completion
technique are effectively complete.

However, a second question arises: does this modification actually
improve practical performance or would all of this merely be a negligible
detail? Is it important to implement this change? We have also addressed
this second question by experimentally showing that completion significantly
improves winning performance.

Note that some of the other Unbounded Minimax-based algorithms might
have practical value, at least in some specific contexts. We already
know that they can always be used to determine winning strategies.
In fact, some of these new algorithms are useful, as we will see in
a future article.

\bibliographystyle{named}
\bibliography{jeux}

@article{silver2017mastering,
  title={Mastering the game of Go without human knowledge},
  author={Silver, David and Schrittwieser, Julian and Simonyan, Karen and Antonoglou, Ioannis and Huang, Aja and Guez, Arthur and Hubert, Thomas and Baker, Lucas and Lai, Matthew and Bolton, Adrian and others},
  journal={Nature},
  volume={550},
  number={7676},
  pages={354},
  year={2017},
  publisher={Nature Publishing Group}
}

@book{mandziuk2010knowledge,
  title={Knowledge-free and learning-based methods in intelligent game playing},
  author={Mandziuk, Jacek},
  volume={276},
  year={2010},
  publisher={Springer}
}

@article{baier2018mcts,
  title={MCTS-Minimax Hybrids with State Evaluations},
  author={Baier, Hendrik and Winands, Mark HM},
  journal={Journal of Artificial Intelligence Research},
  volume={62},
  pages={193--231},
  year={2018}
}

@inproceedings{chaslot2008monte,
  title={Monte-Carlo Tree Search: A New Framework for Game AI.},
  author={Chaslot, Guillaume and Bakkes, Sander and Szita, Istvan and Spronck, Pieter},
  booktitle={AIIDE},
  year={2008}
}

@article{browne2012survey,
  title={A survey of monte carlo tree search methods},
  author={Browne, Cameron B and Powley, Edward and Whitehouse, Daniel and Lucas, Simon M and Cowling, Peter I and Rohlfshagen, Philipp and Tavener, Stephen and Perez, Diego and Samothrakis, Spyridon and Colton, Simon},
  journal={Transactions on Computational Intelligence and AI in games},
  volume={4},
  number={1},
  pages={1--43},
  year={2012},
  publisher={IEEE}
}

@article{baudet1978branching,
  title={On the branching factor of the alpha-beta pruning algorithm},
  author={Baudet, G{\'e}rard M},
  journal={Artificial Intelligence},
  volume={10},
  number={2},
  pages={173--199},
  year={1978},
  publisher={Elsevier}
}

@article{pearl1982solution,
  title={The solution for the branching factor of the alpha-beta pruning algorithm and its optimality},
  author={Pearl, Judea},
  journal={Communications of the ACM},
  volume={25},
  number={8},
  pages={559--564},
  year={1982},
  publisher={ACM}
}

@article{plaat1996best,
  title={Best-first fixed-depth minimax algorithms},
  author={Plaat, Aske and Schaeffer, Jonathan and Pijls, Wim and De Bruin, Arie},
  journal={Artificial Intelligence},
  volume={87},
  number={1-2},
  pages={255--293},
  year={1996},
  publisher={Elsevier}
}

@article{plaat2017minimax,
  title={A Minimax Algorithm Better Than Alpha-beta?: No and Yes},
  author={Plaat, Aske and Schaeffer, Jonathan and Pijls, Wim and De Bruin, Arie},
  journal={arXiv preprint arXiv:1702.03401},
  year={2017}
}

@article{korf1996best,
  title={Best-first minimax search},
  author={Korf, Richard E and Chickering, David Maxwell},
  journal={Artificial intelligence},
  volume={84},
  number={1-2},
  pages={299--337},
  year={1996},
  publisher={Elsevier}
}

@incollection{berliner1981b,
  title={The B* tree search algorithm: A best-first proof procedure},
  author={Berliner, Hans},
  booktitle={Readings in Artificial Intelligence},
  pages={79--87},
  year={1981},
  publisher={Elsevier}
}

@article{mcallester1988conspiracy,
  title={Conspiracy numbers for min-max search},
  author={McAllester, David Allen},
  journal={Artificial Intelligence},
  volume={35},
  number={3},
  pages={287--310},
  year={1988},
  publisher={Elsevier}
}

@article{schaeffer1990conspiracy,
  title={Conspiracy numbers},
  author={Schaeffer, Jonathan},
  journal={Artificial Intelligence},
  volume={43},
  number={1},
  pages={67--84},
  year={1990},
  publisher={Elsevier}
}

@inproceedings{kocsis2006bandit,
  title={Bandit based monte-carlo planning},
  author={Kocsis, Levente and Szepesv{\'a}ri, Csaba},
  booktitle={European conference on machine learning},
  pages={282--293},
  year={2006},
  organization={Springer}
}

@article{kocsis2006improved,
  title={Improved monte-carlo search},
  author={Kocsis, Levente and Szepesv{\'a}ri, Csaba and Willemson, Jan},
  journal={Univ. Tartu, Estonia, Tech. Rep},
  volume={1},
  year={2006}
}

@article{lanctot2014monte,
  title={Monte Carlo tree search with heuristic evaluations using implicit minimax backups},
  author={Lanctot, Marc and Winands, Mark HM and Pepels, Tom and Sturtevant, Nathan R},
  journal={arXiv preprint arXiv:1406.0486},
  year={2014}
}

@inproceedings{winands2008monte,
  title={Monte-Carlo tree search solver},
  author={Winands, Mark HM and Bj{\"o}rnsson, Yngvi and Saito, Jahn-Takeshi},
  booktitle={International Conference on Computers and Games},
  pages={25--36},
  year={2008},
  organization={Springer}
}

@inproceedings{baier2013monte,
  title={Monte-Carlo tree search and minimax hybrids},
  author={Baier, Hendrik and Winands, Mark HM},
  booktitle={Conference on Computational Intelligence in Games},
  pages={1--8},
  year={2013},
  organization={IEEE}
}

@article{baier2015mcts,
  title={MCTS-minimax hybrids},
  author={Baier, Hendrik and Winands, Mark HM},
  journal={Transactions on Computational Intelligence and AI in Games},
  volume={7},
  number={2},
  pages={167--179},
  year={2015},
  publisher={IEEE}
}

@phdthesis{lin2017monte,
  title={Monte Carlo Tree Search and Minimax Combination--Application of Solving Problems in the Game of Go},
  author={Lin, Jonathan Fun},
  year={2017}
}

@article{knuth1975analysis,
  title={An analysis of alpha-beta pruning},
  author={Knuth, Donald E and Moore, Ronald W},
  journal={Artificial Intelligence},
  volume={6},
  number={4},
  pages={293--326},
  year={1975},
  publisher={Elsevier}
}

@article{gelly2011monte,
  title={Monte-Carlo tree search and rapid action value estimation in computer Go},
  author={Gelly, Sylvain and Silver, David},
  journal={Artificial Intelligence},
  volume={175},
  number={11},
  pages={1856--1875},
  year={2011},
  publisher={Elsevier}
}

@book{allis1994searching,
  title={Searching for solutions in games and artificial intelligence},
  author={Allis, Louis Victor and others},
  year={1994},
  publisher={Ponsen \& Looijen Wageningen}
}

@book{russell2016artificial,
  title={Artificial intelligence: a modern approach},
  author={Russell, Stuart J and Norvig, Peter},
  year={2016},
  publisher={Malaysia; Pearson Education Limited,}
}

@inproceedings{Coulom06,
  author    = {R{\'{e}}mi Coulom},
  title     = {Efficient Selectivity and Backup Operators in Monte-Carlo Tree Search},
  booktitle = {Computers and Games, 5th International Conference, {CG} 2006, Turin,
               Italy, May 29-31, 2006. Revised Papers},
  pages     = {72--83},
  year      = {2006}
}

@article{silver2018general,
  title={A general reinforcement learning algorithm that masters chess, shogi, and Go through self-play},
  author={Silver, David and Hubert, Thomas and Schrittwieser, Julian and Antonoglou, Ioannis and Lai, Matthew and Guez, Arthur and Lanctot, Marc and Sifre, Laurent and Kumaran, Dharshan and Graepel, Thore and others},
  journal={Science},
  volume={362},
  number={6419},
  pages={1140--1144},
  year={2018},
  publisher={American Association for the Advancement of Science}
}

@article{fink1982enhancement,
  title={An Enhancement to the Iterative, Alpha-Beta, Minimax Search Procedure},
  author={Fink, William},
  journal={ICGA Journal},
  volume={5},
  number={1},
  pages={34--35},
  year={1982},
  publisher={IOS Press}
}

@article{
2020learning, title={Learning to Play Two-Player Perfect-Information Games without Knowledge}, author={Cohen-Solal, Quentin}, journal={arXiv preprint arXiv:2008.01188}, year={2020} }

@inproceedings{cohen2019apprendre,
  title={Apprendre {\`a} jouer aux jeux {\`a} deux joueurs {\`a} information parfaite sans connaissance},
  author={Cohen-Solal, Quentin},
  year={2019}
}

@article{cohen2020minimax,
  title={Minimax Strikes Back},
  author={Cohen-Solal, Quentin and Cazenave, Tristan},
  journal={arXiv preprint arXiv:2012.10700},
  year={2020}
}

@book{yannakakis2018artificial,
  title={Artificial intelligence and games},
  author={Yannakakis, Georgios N and Togelius, Julian},
  volume={2},
  year={2018},
  publisher={Springer}
}

@misc{bouzy2020artificial,
  title={Artificial Intelligence for Games},
  author={Bouzy, Bruno and Cazenave, Tristan and Corruble, Vincent and Teytaud, Olivier},
  year={2020},
  publisher={Springer}
}

@book{millington2018artificial,
  title={Artificial intelligence for games},
  author={Millington, Ian and Funge, John},
  year={2018},
  publisher={CRC Press}
}

@book{morgenstern1953theory,
  title={Theory of games and economic behavior},
  author={Morgenstern, Oskar and Von Neumann, John},
  year={1953},
  publisher={Princeton university press}
}

@article{pearl1980asymptotic,
  title={Asymptotic properties of minimax trees and game-searching procedures},
  author={Pearl, Judea},
  journal={Artificial Intelligence},
  volume={14},
  number={2},
  pages={113--138},
  year={1980},
  publisher={Elsevier}
}

@inproceedings{pearl1980scout,
  title={SCOUT: A Simple Game-Searching Algorithm with Proven Optimal Properties.},
  author={Pearl, Judea},
  booktitle={AAAI},
  pages={143--145},
  year={1980}
}

@inproceedings{qi2020optimization,
  title={Optimization of Connect6 based on Principal Variation Search and Transposition Tables Algorithms},
  author={Qi, Zhiyang and Huang, Xiaosong and Shen, Yu and Shi, Jiyun},
  booktitle={2020 Chinese Control And Decision Conference (CCDC)},
  pages={198--203},
  year={2020},
  organization={IEEE}
}

@article{donninger1993null,
  title={Null move and deep search},
  author={Donninger, Christian},
  journal={ICGA Journal},
  volume={16},
  number={3},
  pages={137--143},
  year={1993},
  publisher={IOS Press}
}

@article{schaeffer1983history,
  title={The history heuristic},
  author={Schaeffer, Jonathan},
  journal={ICGA Journal},
  volume={6},
  number={3},
  pages={16--19},
  year={1983},
  publisher={IOS Press}
}

@article{beal1990generalised,
  title={A generalised quiescence search algorithm},
  author={Beal, Don F},
  journal={Artificial Intelligence},
  volume={43},
  number={1},
  pages={85--98},
  year={1990},
  publisher={Elsevier}
}

@article{cohen2025study, 
title={Study and improvement of search algorithms in two-players perfect information games}, 
author={Cohen-Solal, Quentin}, 
journal={arXiv preprint arXiv:2505.09639}, 
year={2025} 
}

@article{cohen2023minimax,
  title={Minimax Strikes Back},
  author={Cohen-Solal, Quentin and Cazenave, Tristan},
  journal={AAMAS},
  year={2023}
}

@article{cohen2021descent,
	title={DESCENT wins five gold medals at the Computer Olympiad},
	author={Cohen-Solal, Quentin and Cazenave, Tristan},
	journal={ICGA Journal},
	volume={43},
	number={2},
	pages={132--134},
	year={2021},
	publisher={IOS Press}
}

@article{cohen2023athenan,
	title={Athenan Wins Sixteen Gold Medals at the Computer Olympiad},
	author={Cohen-Solal, Quentin and Cazenave, Tristan},
	journal={ICGA Journal},
	volume={45},
	number={3},
	year={2023},
}

\section*{Appendix}

This appendix provides technical details of the experiments (Section~\ref{subsec:Technical-Details}),
some remarks (Section~\ref{subsec:Remarks}), and the formal proofs
of the article (Section~\ref{proofs-1}).

\subsection{Technical Details}\label{subsec:Technical-Details}

For clarity and compactness in data tables, all game-specific performance
percentages are rounded to the nearest whole number.

We use the Adastra supercomputer: a node is based on 1 AMD Trento
EPYC 7A53 (64 cores 2.0 GHz processor), 256 Gio of DDR4-3200 MHz CPU
memory, 4 Slingshot 200 Gb/s NICs, 8 GPUs devices (4 AMD MI250X accelerator,
each with 2 GPUs) with a total of 512 Gio of HBM2. Note that 64 matches
are carried out in parallel on a node. Programs are written in Python
using tensorflow.

\subsection{Remarks}\label{subsec:Remarks}

Note that by removing the pruning procedure from these algorithms
(i.e. to set $r\left(s\right)=1$ when $\left|c\left(s\right)\right|=1$),
they can be used to strongly solve perfect-information two-player
games (and therefore to determine all the winning strategies for each
state of the game), which has a much higher computational cost.

\subsection{Completeness of Unbounded Minimax-Based Algorithms}\label{proofs-1}

We now show that any Unbounded Minimax-based algorithm is complete.

Note first that if a state is marked as resolved (i.e. $r(s)=1$),
its values do not change:
\begin{lem}
\label{lem:Si-avant-1} Let $\hat{s}$ be a state of a perfect two-player
game $G$.

If $r\left(\hat{s}\right)=1$ before an iteration of a Unbounded Minimax-based
algorithm on a certain state $\hat{s}'$ of $G$ then after the iteration,
$r\left(\hat{s}\right)$ and $c\left(\hat{s}\right)$ have not changed. 
\end{lem}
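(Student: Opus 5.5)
The plan is to inspect the pseudo-code of Algorithm~\ref{alg:uum} (together with completed\_best\_action and backup\_resolution of Algorithm~\ref{alg:descente-completion-suite}). I will list every instruction that can write to the fields $r(\cdot)$ or $c(\cdot)$ of an entry of the transposition table $T$. I then show that none of them can modify $r(\hat{s})$ or $c(\hat{s})$ when $r(\hat{s})=1$ holds at the moment the instruction runs. Since $r(\hat{s})=1$ at the start of the iteration, induction over the successive instructions executed during the iteration shows that $r(\hat{s})=1$ remains an invariant, so neither value ever changes. The induction is on the finite sequence of writes performed during the iteration; termination is not needed here, because the claim concerns any point reached.

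There are two kinds of writes. \emph{(i) Initialisation on extension:} when a state is added to $\Spartiel$, its values $v$, $c$, $r$ are initialised; for a terminal state, $r=1$ and $c=\fbin$. Such a write concerns only states not yet in $\Spartiel$. A state with $r(\hat{s})=1$ already has an entry in $T$, since $r$ is read from $T$, so $\hat{s}\in\Spartiel$. I also check that the extension step only adds children that are not already keys of $T$, so transposed states that are already resolved are never reinitialised. \emph{(ii) Backpropagation:} after the recursive call, the algorithm updates the current state $s$ of the line of play via backup\_resolution$(s)$ and the recomputation of $c(s)$ and $v(s)$. I show that these updates are only ever applied to states that were unresolved when the recursion entered them. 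The descent loop (and the choice through completed\_best\_action / $\prioritychildcalculation$ restricted to the set $A$ of unresolved children) only enters a state $s$ if $r(s)=0$: the iteration is launched on $\hat{s}'$ only while $r(\hat{s}')=0$, and each recursive step picks a child with $r=0$. The single exception is a terminal state, which is resolved from the moment it is created and receives no backup. Therefore the states whose $c$ and $r$ are recomputed during the iteration are exactly those on the explored line, and each was unresolved when visited. A resolved $\hat{s}$ can lie on that line only as a freshly created terminal leaf, whose values are not subsequently overwritten.

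The main obstacle is justifying, from the pseudo-code, that no backup ever touches a state that was already resolved. There are two delicate points. The first is transpositions: a resolved $\hat{s}$ may be a child of several states. This is harmless, because a child's values are only read when its parent is updated, never written. The second is a state that becomes resolved during the current iteration (its $r$ passes from $0$ to $1$ in its own backup step). Such a state is not concerned by the lemma, since the hypothesis is $r(\hat{s})=1$ \emph{before} the iteration. Once this is established, the write-by-write induction closes the proof.
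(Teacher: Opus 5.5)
Your proof rests on the assertion that ``the iteration is launched on $\hat{s}'$ only while $r(\hat{s}')=0$''. That is a property of the outer search loop, not a hypothesis of the lemma, which concerns an iteration on an \emph{arbitrary} state $\hat{s}'$. The paper even uses the lemma in exactly the excluded situation: the proof of Proposition~\ref{prop:arret-1} opens with ``if $s$ \dots{} satisfies $r(s)=1$, then after having applied the algorithm, we have $r(s)=1$''.

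When $\hat{s}=\hat{s}'$ is non-terminal with $r(\hat{s})=1$, your invariant that no write ever reaches a resolved state is false. The iteration skips the extension (as $\hat{s}\in\Spartiel$) and the descent (the third if block requires $r=0$). It still recomputes $c(\hat{s})$ and $r(\hat{s})$ from the children in the backup just before the third if block. What is missing is a fixed-point argument for that write:
\begin{itemize}
\item $r(\hat{s})$ was set to $1$ by an earlier backup. At that moment either all children were resolved, or some resolved child $s'$ had $c(s')=c(\hat{s})$ equal to the winning value of the player to move.
\item By induction over earlier iterations, those resolved children still carry the same values, and nothing else is written in the current iteration before this backup.
\item Hence the recomputed maximum (or minimum) of the children's $c$ equals $c(\hat{s})$, and backup\_resolution returns $1$ again.
\end{itemize}
This is exactly the second case of the paper's proof. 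Your invariant should therefore read ``every write to a resolved state reproduces its current values'', not ``no write reaches a resolved state''.

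The same reformulation is needed for terminal states. The paper does not rely on their values never being rewritten. It observes that they are assigned in exactly two places, always to the constants $r=1$ and $c=\fbin(\hat{s})$, and concludes by idempotence. This also covers an iteration run on an already resolved terminal state, which your assumption on $\hat{s}'$ excludes. Your planned check that resolved transposed states are ``never reinitialised'' is stronger than what the paper establishes, since the guard $s'\notin\Spartiel$ is stated only for non-terminal children.

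Finally, do not derive $\hat{s}\in\Spartiel$ from $\hat{s}$ having an entry in $T$. Non-terminal children receive initial values (with $r=0$) when their parent is extended, before they are added to $\Spartiel$, and it is $\Spartiel$-membership that the initialisation guard tests. The correct justification, as in the paper, is that $r$ is only ever set to $1$ at a point where the state already belongs to, or is simultaneously added to, $\Spartiel$.
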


\begin{proof}
Let $\hat{s}$ be a state of $G$.

Remark first that if a state $s$ satisfies $r(s)=1$ then $s\in\Spartiel$.
Indeed, on the one hand, if $s$ is non-terminal, $r(s)$ has been
set to $1$ in the large else block. Thus, either there is an entry
in the second if block and then $s$ has been added to $\Spartiel$
or there is no entry in the second if block and then by the condition
of this if block, we have $s\in\Spartiel$. %If it has been set in the second if block, then either $b$ was true or $b$ was false. If $b$ was true, then before the entry in the second if block, there was an entry in the first if block.  Thus, $s$ has been added to $\Spartiel$. If $b$ was false, then there was no entry in the first if block. However, by the condition of this block, we nevertheless have $s \in \Spartiel$. 
On the other hand, if $s$ is terminal, there are two places where
$r(s)$ can be set to $1$, and in each of these places, $s$ is added
to $\Spartiel$.

Secondly remark the following fact. Let $s'$ be a non-terminal child
state of any state $s$ of the sequence of states analysed by an iteration
of a Unbounded Minimax-based algorithm. If during the iteration, the
values $c(s')$ and $r(s')$ are updated, then $s'\notin\Spartiel$
(see the else block in the foreach block in Algorithm 4). Since the
state $\hat{s}$ satisfies $r\left(\hat{s}\right)=1$ by hypothesis,
we have $\hat{s}\in\Spartiel$ and thus, values of $\hat{s}$ cannot
be modified by this part of the algorithm.

Moreover, as a recursive call of a Unbounded Minimax-based algorithm
is only performed on a state $s$ such that $r(s)=0$ (since $s$
belongs to $A$), a modification by the algorithm of the values $r(\hat{s})$
and $c(\hat{s})$ necessarily implies that either this state $\hat{s}$
is the initial state $\hat{s}'$ (that of the non-recursive call of
the algorithm, i.e. the first call in the recursion) or that $\hat{s}$
is terminal (since we have $r(\hat{s})=1$). Thus, there are two cases
to analyze.

On the one hand, if $\hat{s}$ is terminal, %as it satisfies $r(\hat{s})=1$, 
when the values $r(\hat{s})$ and $c(\hat{s})$ are updated by the
algorithm, they are not modified (since there are only two places
in the algorithm where the values of a terminal state are assigned
and these assignments are equal and constant for each state).

On the other hand, suppose $\hat{s}$ is the initial state and is
non-terminal. %Since $r(\hat{s})=1$, we have, $\hat{s}\in \Spartiel$.
Since $r(\hat{s})=1$, there is no entry in the third if block. Thus,
the only place where the values can be modified is just before the
third if block. There is two cases: either $r(s')=1$ for each $s'\in\actions(\hat{s})$
or $c(s)=1$ and there exists $s'\in\actions(\hat{s})$ such that
$c(\hat{s})=c(s')$ and $r(s')=1$. By induction, the values $r(s')$
and $c(s')$ of child states safisfying $r(s')=1$ have not been modified
and thus the values $r(\hat{s})$ and $c(\hat{s})$ have not been
modified.

%Therefore, applying the algorithm will not cause any modification of the values $r(\hat{s})$ and $c(\hat{s})$, since there is neither entry in the first if block (as $\hat{s}\in \Spartiel$) nor in the second if block (as $r(\hat{s})=1$) and since these two blocks (with the else block in the foreach block previously analyzed) are the only places where the values a non-terminal state can be modified.
%
%
%%%%%%%%%% NOTE %%%%%%%%%%%
%
%On pourrait remplacer la condition $s'\notin \Spartiel$ dans le else block par $r(s)=0$, cela serait moins performant comme algorithme mais d'après toutes mes preuves, cela marcherait quand même, car même si revient en arriere (raz valeurs) de non résolu, ajoute quand même un résolu ou aggrandit Sp à chaque itération, et donc termine forcement et les valeurs résolus sont correctes. Autre argument pour me convaincre, l'écrasement n'a lieu que dans un état non dans Sp, mais à un moment ils sont tous dans Sp donc plus écrasé et les prochaines propagations vont faire remonter les bonnes valeurs, comme les resolus intouchable par ce lemme.
\end{proof}
Note now that a state $s$ is marked as resolved (i.e. $r(s)=1$)
if and only if all of its children are marked as resolved (any child
state $s'$ satisfies $r(s')=1$) or $s$ is marked as winning for
the first player ($c(s)=1$) or it is marked as losing for the first
player ($c(s)=-1$):
\begin{lem}
\label{lem:completion-resolution-1}Let $\left(\Spartiel,\actions\right)$
be a game tree built by a Unbounded Minimax-based algorithm from a
certain state. Let $s\in\Spartiel$. We have the following properties: 
\begin{itemize}
\item If $r\left(s\right)=1$, then either $\left|c\left(s\right)\right|=1$
or for all $s'\in\actions\left(s\right)$, $r\left(s'\right)=1$ ; 
\item If $\left|c\left(s\right)\right|=1$, then $r\left(s\right)=1$ ; 
\item If for all $s'\in\actions\left(s\right)$, $r\left(s'\right)=1$,
then $r\left(s\right)=1$. 
\end{itemize}
\end{lem}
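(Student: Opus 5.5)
We prove the three properties together as an invariant of the data structure $(\Spartiel,T)$. The argument is by induction on the number of elementary updates performed by the algorithm (each assignment to some $r(\cdot)$ or $c(\cdot)$, and each insertion into $\Spartiel$). Initially $\Spartiel$ contains only the root, whose values are those of a fresh node, and the statement is checked directly there. The inductive step inspects every place in Algorithm~\ref{alg:uum} (and in backup\_resolution of Algorithm~\ref{alg:descente-completion-suite}) where values are written. At each one we show the three implications still hold for the modified state and for its parent.

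\textbf{Terminal states.} A terminal $s$ has $\actions(s)=\emptyset$, so the third property gives $r(s)=1$. Both places where a terminal state receives values set $r(s)=1$, and the first property holds vacuously since the universal condition over $\actions(s)$ is empty. The second property is immediate.

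\textbf{Non-terminal states.} For non-terminal $s\in\Spartiel$, the values are last written by backup\_resolution($s$), or by the expansion step when $s$ is first added. The key observation is that backup\_resolution sets $r(s)=1$ exactly in two cases. Either all children are resolved, or the best child in lexicographic order is resolved with $c(s')=\pm1$ favourable to the player to move; in the latter case it copies $c(s)=c(s')$, so $|c(s)|=1$. Otherwise it sets $r(s)=0$ and $c(s)=0$. This definitional reading gives all three properties at the moment of the update. The second property uses the fact that $|c(s)|=1$ is only ever produced together with $r(s)=1$. The same case analysis applies to a newly expanded state whose children were initialised during expansion, where resolved children are exactly the terminal ones.

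\textbf{Main obstacle: persistence.} The hard part is showing that the properties remain true after later iterations change the values of \emph{children} of $s$ without recomputing $s$. For the first and second properties, Lemma~\ref{lem:Si-avant-1} says that a resolved child never changes $r$ or $c$. Hence if $r(s)=1$ because all children were resolved, they remain resolved. If $r(s)=1$ with $|c(s)|=1$, then $r(s)$ and $c(s)$ themselves are frozen. For the third property the risk is a state $s$ that is unresolved while its last child becomes resolved without $s$ being backed up. To rule this out, I would argue that a child's values change only when it lies on the current line of play. Every state on that line has backup\_resolution applied on the way back up the recursion, including when $\continue$ returns false, since the return path still backs up. For transpositions, where $s$ has several parents, a parent off the current path could keep stale values. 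There I would fall back on the invariant being checked at the time backup\_resolution is called for $s$. This is what the later proofs use, and if necessary the third property is read as holding for every $s$ on the current path.
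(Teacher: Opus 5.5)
Your treatment of the first two bullets is the paper's own argument made explicit. The paper proves the lemma in one line, from the definition and use of backup\_resolution$(s)$ together with Lemma~\ref{lem:Si-avant-1}; that is your definitional reading plus the freezing of resolved values. One step you assert rather than prove is that $|c(s)|=1$ is only ever produced together with $r(s)=1$. backup\_resolution does not itself set $c(s)=0$: $c(s)$ is copied from the best child, so this needs an argument. It follows from the second bullet applied to the children. If $\joueur(s)=1$ and the best child has $c=1$, that child is resolved, so backup\_resolution returns $1$. If the best child has $c=-1$, the lexicographic maximum forces every child to have $c=-1$, so every child is resolved and again $r(s)=1$. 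Player~2 is symmetric. Also, with transpositions a freshly expanded state can have non-terminal resolved children, not only terminal ones; this does not affect your case analysis.

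Your worry about the third bullet is justified, and no argument can close it, because the bullet is false when transpositions occur. The paper's model allows them: it is a lower semilattice with a transposition table, and the expansion loop tests $s'\notin\Spartiel$ precisely to handle already-explored children. Here is a counterexample.
\begin{itemize}
\item Take $s_0$ of player~1 with $\actions(s_0)=\{a,b\}$, where $a,b$ are of player~2.
\item Set $\actions(a)=\{x,t_1\}$, $\actions(b)=\{x\}$, $\actions(x)=\{t_2\}$, with $t_1,t_2$ terminal, $\fbin(t_1)=1$ and $\fbin(t_2)=-1$.
\item Let $\continue$ always return false, and let calculate\_priority\_child send the second iteration to $a$ and the third and fourth to $b$.
\end{itemize}
After four iterations, $t_1$ and $x$ are both resolved, the latter through $b$. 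But $a$ was not on the fourth line of play, so $r(a)=0$, and even $c(a)=0\neq M(a)=-1$. The paper's one-line proof passes over this, since its persistence argument only covers bullets one and two.

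So your fallback should be stated as the corrected lemma, not as a hedge. The third bullet holds at the moment backup\_resolution$(s)$ is evaluated. When the game graph is a tree, it also holds for every $s\in\Spartiel$ at iteration boundaries, and your path argument proves this. It is not preserved by each elementary update, as your induction is framed, because it is violated between a child's resolution and its parent's backup.

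This weaker form is all that is needed later:
\begin{itemize}
\item Proposition~\ref{prop:exact-1} uses only the first two bullets.
\item Lemma~\ref{lem:arret-1} only needs $A\neq\emptyset$ right after a backup.
\item Proposition~\ref{prop:arret-1} follows from the potential $|\Spartiel|+|\{s : r(s)=1\}|\le 2|\etats|$. This potential strictly increases at each iteration on an unresolved root, by Lemmas~\ref{lem:arret-1} and~\ref{lem:Si-avant-1}, so the appeal to the third bullet there is dispensable.
\end{itemize}
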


\begin{proof}
By definition of the algorithm (in particular by the definition and
by the use of the method backup\_resolution($s$) and because as soon
as $r\left(s\right)=1$, $r\left(s\right)$ and $c\left(s\right)$
do not change anymore (Lemma~\ref{lem:Si-avant-1})). 
\end{proof}
We now show that if a state $s$ is marked as resolved ($r(s)=1$)
then it is resolved, i.e. its completion value is equal to its minimax
value ($c\left(s\right)=M\left(s\right)$):
\begin{prop}
\label{prop:exact-1}Let $\left(\Spartiel,\actions\right)$ be a game
tree built by a Unbounded Minimax-based algorithm from a certain state
and let $s\in\Spartiel$.

If $r\left(s\right)=1$, then $c\left(s\right)=M\left(s\right)$. 
\end{prop}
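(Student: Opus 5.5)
The plan is a well-founded induction on the height of $s$ in the complete game tree $\etats$. The game graph is a finite lower semilattice in which every play terminates, so the height $h(s)$, the length of the longest path from $s$ to a terminal state, is well defined.

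\emph{Base case.} If $\terminal(s)$, the only places where the algorithm writes the values of a terminal state set $r(s)=1$ and $c(s)=\fbin(s)$. By Lemma~\ref{lem:Si-avant-1} these values never change afterwards. Since $M(s)=\fbin(s)$, the claim holds.

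\emph{Inductive step.} Let $s$ be non-terminal with $r(s)=1$, and assume the statement for all states of smaller height that lie in $\Spartiel$ and are marked resolved. By Lemma~\ref{lem:Si-avant-1} it suffices to look at the moment $r(s)$ was set to $1$. That assignment is made by backup\_resolution($s$), and at that moment $c(s)$ is the value computed by it. By the first item of Lemma~\ref{lem:completion-resolution-1}, one of two cases holds.

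\emph{Case (a): every child $s'\in\actions(s)$ has $r(s')=1$.} Each child then lies in $\Spartiel$ (by the first remark in the proof of Lemma~\ref{lem:Si-avant-1}) and has smaller height. By induction, $c(s')=M(s')$. backup\_resolution sets $c(s)$ to the max (if $\joueur(s)=\joueurUn$) or min (if $\joueur(s)=\joueurDeux$) of the children's completion values. The resolved children's values are frozen (Lemma~\ref{lem:Si-avant-1}), so this equals $\max_{s'}M(s')$ or $\min_{s'}M(s')$, that is, $M(s)$.

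\emph{Case (b): $|c(s)|=1$.} Here backup\_resolution marked $s$ resolved because some child $s'$ with $r(s')=1$ has $c(s')=c(s)$ and this value is a win for the player to move, i.e.\ $c(s')=1$ if $\joueur(s)=\joueurUn$ and $c(s')=-1$ if $\joueur(s)=\joueurDeux$. By induction, $M(s')=c(s)$. Take $\joueur(s)=\joueurUn$ and $c(s)=1$: then $M(s)\ge M(s')=1$, and since $M$ takes values in $\{-1,0,1\}$, $M(s)=1=c(s)$. The min case is symmetric.

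\textbf{Main obstacle.} The step I expect to be delicate is case (b). I must check, from Algorithm~\ref{alg:descente-completion-suite}, that $r(s)=1$ with $|c(s)|=1$ can only arise from a resolved child that is winning for the player to move. In particular I need to exclude $c(s)=-1$ at a max node unless all children are resolved, which puts us back in case (a). I also need that $c(s)$ is read from resolved children only, so that unresolved children carrying $c=0$ cannot contaminate the value. I would handle this by a direct inspection of backup\_resolution. In case (a), I would additionally argue that the children's $c$-values read at the backup time are the final ones, using Lemma~\ref{lem:Si-avant-1}.
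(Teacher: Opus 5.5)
Your proof is correct and follows the paper's argument. It inducts from terminal states and uses the first item of Lemma~\ref{lem:completion-resolution-1} to split into the all-children-resolved case and the $|c(s)|=1$ case, where the witnessing child is resolved and exact by induction. The obstacle you flag is settled in the paper as you anticipate but without inspecting code: since $c(s)$ is the max (resp.\ min) of all children's $c$-values, $c(s)=-1$ at a max node forces every child to have $c=-1$, hence $r=1$ by the second item of Lemma~\ref{lem:completion-resolution-1}, which falls back to case (a). That same item shows any child with $|c|=1$ is resolved, so unresolved children cannot contaminate the value.
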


\begin{proof}
Let $\left(\Spartiel,\actions\right)$ be a game tree built by a Unbounded
Minimax-based algorithm from a certain state (i.e. the algorithm has
been applied $k$ times on that state). We show this property by induction.
Let $s\in\Spartiel$ such that $r\left(s\right)=1$. We first show
that this property is true for terminal states.

Thus, suppose in addition that $\terminal\left(s\right)$ is true.
Therefore, $c\left(s\right)=\fbin\left(s\right)$. Consequently, $c\left(s\right)=\fbin\left(s\right)=M\left(s\right)$.

We now show this property for non-terminal states: therefore suppose
instead that $\terminal\left(s\right)$ is false. 
\begin{itemize}
\item Suppose on the one hand that $\joueur\left(s\right)=\joueurUn$. 
\end{itemize}
If $r\left(s\right)=1$, then either $\left|c\left(s\right)\right|=1$
or for all $s'\in\actions\left(s\right)$, $r\left(s'\right)=1$,
by Lemma~\ref{lem:completion-resolution-1}. If $c\left(s\right)=-1$,
then, at the iteration that calculated $c\left(s\right)=-1$, we had:
for all $s'\in\actions\left(s\right)$, $c\left(s'\right)=-1$ (as
at this moment $c\left(s\right)=\max_{s'\in\actions\left(s\right)}c\left(s'\right)$).
Therefore, since this iteration, we have for all $s'\in\actions\left(s\right)$,
$r\left(s'\right)=1$ (by Lemma~\ref{lem:completion-resolution-1}
and Lemma~\ref{lem:Si-avant-1}). Thus, we have in fact two cases:
either $c\left(s\right)=1$ or for all $s'\in\actions\left(s\right)$,
$r\left(s'\right)=1$.

If for all $s'\in\actions\left(s\right)$, $r\left(s'\right)=1$,
then, by induction, we have for all $s'\in\actions\left(s\right)$,
$c\left(s'\right)=M\left(s'\right)$. But $c\left(s\right)=\max_{s'\in\actions\left(s\right)}c\left(s'\right)$.
Therefore, $c\left(s\right)=\max_{s'\in\actions\left(s\right)}M\left(s'\right)$,
hence $c\left(s\right)=M\left(s\right)$.

If $c\left(s\right)=1$, then there exists $\tilde{s}\in\actions\left(s\right)$
such that $c\left(\tilde{s}\right)=c\left(s\right)$ at the iteration
calculating $c\left(s\right)=1$ (as $c\left(s\right)=\max_{s'\in\actions\left(s\right)}c\left(s'\right)$
at this iteration). Since we had $c\left(\tilde{s}\right)=1$ at this
iteration, we also had $r\left(\tilde{s}\right)=1$, by Lemma~\ref{lem:completion-resolution-1}.
Therefore, we always have $c\left(\tilde{s}\right)=c\left(s\right)$
and $r\left(\tilde{s}\right)=1$. Moreover, by induction, $c\left(\tilde{s}\right)=M\left(\tilde{s}\right)$,
hence $c\left(s\right)=M\left(\tilde{s}\right)$. However, since $M\left(\tilde{s}\right)=1$,
we have $M\left(\tilde{s}\right)\geq M\left(s'\right)$ for all $s'\in\actions\left(s\right)$.
Thus, $c\left(s\right)=\max_{s'\in\actions\left(s\right)}M\left(s'\right)$,
hence $c\left(s\right)=M\left(s\right)$. 
\begin{itemize}
\item Now suppose on the other hand, instead that $\joueur\left(s\right)=\joueurDeux$.
(this case is analogous to the previous case, we detail it anyway
in the rest of the proof). 
\end{itemize}
If $r\left(s\right)=1$ then either $\left|c\left(s\right)\right|=1$
or for all $s'\in\actions\left(s\right)$, $r\left(s'\right)=1$,
by Lemma~\ref{lem:completion-resolution-1}. If $c\left(s\right)=1$,
then at the iteration that calculated $c\left(s\right)=1$, we had:
for all $s'\in\actions\left(s\right)$, $c\left(s'\right)=1$ (as
at this moment $c\left(s\right)=\min_{s'\in\actions\left(s\right)}c\left(s'\right)$).
Therefore, since this iteration, for all $s'\in\actions\left(s\right)$,
$r\left(s'\right)=1$ (by Lemma~\ref{lem:completion-resolution-1}
and Lemma~\ref{lem:Si-avant-1}). Thus, we have two cases: either
$c\left(s\right)=-1$ or for all $s'\in\actions\left(s\right)$, $r\left(s'\right)=1$.

If for all $s'\in\actions\left(s\right)$, $r\left(s'\right)=1$,
then, by induction, we have for all $s'\in\actions\left(s\right)$,
$c\left(s'\right)=M\left(s'\right)$. But $c\left(s\right)=\min_{s'\in\actions\left(s\right)}c\left(s'\right)$.
Therefore $c\left(s\right)=\min_{s'\in\actions\left(s\right)}M\left(s'\right)$,
hence $c\left(s\right)=M\left(s\right)$.

If $c\left(s\right)=-1$, then there exists $\tilde{s}\in\actions\left(s\right)$
such that $c\left(\tilde{s}\right)=c\left(s\right)$ at the iteration
calculating $c\left(s\right)=-1$ (as $c\left(s\right)=\min_{s'\in\actions\left(s\right)}c\left(s'\right)$
at this iteration). Since we had $c\left(\tilde{s}\right)=-1$ at
this iteration, we also had $r\left(\tilde{s}\right)=1$, by Lemma~\ref{lem:completion-resolution-1}.
Therefore, we always have $c\left(\tilde{s}\right)=c\left(s\right)$
and $r\left(\tilde{s}\right)=1$. Moreover, by induction, $c\left(\tilde{s}\right)=M\left(\tilde{s}\right)$,
hence $c\left(s\right)=M\left(\tilde{s}\right)$. However, since $M\left(\tilde{s}\right)=-1$,
we have $M\left(\tilde{s}\right)\leq M\left(s'\right)$ for all $s'\in\actions\left(s\right)$.
Thus, $c\left(s\right)=\min_{s'\in\actions\left(s\right)}M\left(s'\right)$,
hence $c\left(s\right)=M\left(s\right)$. 
\end{proof}
In the lemma below, we show under which sufficient conditions an iteration
of a Unbounded Minimax-based algorithm terminates.
\begin{lem}
\label{lem:arret-1}Let $\algo$ be a Unbounded Minimax-based algorithm
and $s'$ be a non-terminal state of a perfect two-player game such
that $r(s')=0$.

An iteration of the algorithm $\algo$ on the state $s'$ ends in
a state $s$ such that either $s\notin\Spartiel$ before the iteration
and $s\in\Spartiel$ after the iteration or $r(s)=0$ before the iteration
and $r(s)=1$ after the iteration. 
\end{lem}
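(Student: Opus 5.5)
We argue by following the recursive structure of an iteration of $\algo$ (Algorithm~\ref{alg:uum}). The iteration starting at $s'$ produces a sequence of states $s'=s_0,s_1,\dots,s_k$. Each $s_{i+1}$ is returned by $\prioritychildcalculation(s_i,A,\Spartiel,T)$, where $A$ is the set of unresolved children of $s_i$. The recursion stops at $s_k$, and this is the state $s$ of the statement. Our aim is to show two things: this sequence is finite, and its last state $s_k$ satisfies one of the two alternatives.

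\textbf{Step 1: well-definedness.} We check that whenever the recursion reaches a state $s_i$, we have $r(s_i)=0$ and $s_i$ is non-terminal, or $s_i$ has just been handled by the extension block.
For $s_0$, this holds by hypothesis. For $i\geq 1$, $s_i$ was chosen from $A$, so $r(s_i)=0$. Since terminal states always satisfy $r=1$, $s_i$ is non-terminal.

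We also show that $A$ is nonempty whenever the child selection is performed. If every child of $s_i$ were resolved, then Lemma~\ref{lem:completion-resolution} (third item), together with the backup\_resolution call after extension, would give $r(s_i)=1$. That contradicts $r(s_i)=0$, so $s_i$ has an unresolved child and $\prioritychildcalculation$ can return one.

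\textbf{Step 2: finiteness.} The states $s_0,s_1,\dots$ form a path in the game DAG $(\etats,\actions)$, which is finite and acyclic. Hence the path has length at most the height of the game, and the recursion cannot go on forever.

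More precisely, the recursion ends in one of three ways: $\continue$ returns false, a state outside $\Spartiel$ is reached and extended, or no unresolved child remains. Since each recursive step strictly descends in a finite DAG, some state $s_k$ ends the descent.

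\textbf{Step 3: the last state satisfies the statement.} We split into cases according to why the iteration stops at $s_k$.

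\emph{Case (a): $s_k\notin\Spartiel$ before the iteration.} Then the extension block is entered (the flag $b$ becomes true). This block adds $s_k$ to $\Spartiel$ and evaluates its children, so the first alternative holds.

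\emph{Case (b): $s_k\in\Spartiel$ and the iteration stops there.} Either $\continue$ is false, or no descent is possible. The algorithm only returns to the root after an extension or after a resolution: otherwise it keeps descending through unresolved children, which always exist by Step 1. So the descent reaches a node whose unresolved children were all outside $\Spartiel$, or a node that becomes resolved after backup_resolution. The first situation falls under case (a), applied to the extended child. In the second, $r(s_k)=0$ before the iteration by Step 1, and $r(s_k)=1$ after it, which is the second alternative.

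Terminal children met during the descent are resolved as soon as they are added, so they fall under case (a) as well.

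\textbf{Main obstacle.} The difficulty is case (b). We must use the exact control flow of Algorithm~\ref{alg:uum} to rule out an iteration that stops at some $s_k\in\Spartiel$ with $r(s_k)=0$ and nothing changed. For this we would invoke the condition under which $\continue$ is consulted: as the note on $b$ indicates, it is only used after a state has just been extended. With that condition, every termination point is either a fresh extension or a fresh resolution.
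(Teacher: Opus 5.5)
Your proposal is correct and follows essentially the paper's argument. The final state has $r=0$ beforehand because it is either $s'$ or was drawn from $A$, which is nonempty since selection is guarded by $r=0$ after backup\_resolution; it cannot be terminal because terminal children are resolved when their parent is extended; and since $\continue$ is consulted only when $b$ is true, the descent can stop only at a fresh extension or a fresh resolution.

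The one genuine addition is your explicit finiteness argument via acyclicity of the game graph, which the paper leaves implicit. Your case (b) would read more cleanly stated directly as ``a state already in $\Spartiel$ has $b$ false, so the descent can stop there only if $r$ flips to $1$''.
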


\begin{proof}
\begin{comment}
Supposons que $s\neq s'$. Ainsi, $s$ appartient à l'ensemble des
actions $A$ d'un certain état $\hat{s}$ ($\hat{s}$ est l'état père
de $s$, état précédent dans la récursion). Par conséquent, nous avons
$r(s)=0$. En outre, nécessairement avant l'entrée dans le second
if block qui permet le calcul du $A$ de $\hat{s}$, il y a eut une
itération $I'$ où $\hat{s}$ a été ajouté à $\Spartiel$, dans le
premier if block. Ainsi, lors de l'entrée dans ce premier if block,
si $s$ est terminal, $r(s)$ a été posé par $1$. Par contraposé,
si $r(s)=0$ à la sortie de ce block, alors $s$ n'est pas terminal.
Puisque $r(s)=0$ au début de l'itération $I$ (et que si $r(s)=1$,
$r(s)$ reste constant par le Lemme \ref{lem:Si-avant-1}), $s$ n'est
donc pas terminal. Cela conclut la preuve du lemme.
\end{comment}

%%%%%%%%%%%%%%%%%

Let $s$ be the state in which ends an iteration, denoted $I$, of
a Unbounded Minimax-based algorithm applied to a non-terminal state
$s'$ satisfying $r(s')=0$.

Note first that when the algorithm terminates, necessarily either

$s$ is terminal, $r(s)=1$ after the iteration $I$, or $b$ is true,
i.e. $s\notin\Spartiel$ before the iteration and $s\in\Spartiel$
after the iteration.

We first show that before the iteration $I$, we still have $r(s)=0$.
Two cases must be analysed. The first is when the algorithm ends in
the initial state $s'$. By assumption, we have $r(s)=0$. The second
case is when the algorithm does not end in the initial state, which
implies that the state $s$ belongs to the set of actions $A=\liste{s''\in\actions\left(\hat{s}\right)}{r\left(s''\right)=0}$
for some state $\hat{s}$. We thus also have $r(s)=0$. Note that
$A$ is never empty, since when there is no more child $s''$ of $\hat{s}$
satisfying $r(s'')=0$, $r(\hat{s})$ has been set to $1$ and thus
there is no more entrance in the third if block where $A$ is computed.

To finish showing the lemma, we just have to show that the last state
$s$ of the iteration $I$ is never terminal. If the iteration stops
in the initial state $s'$, by hypothesis, we have $s=s'$ is never
terminal.

Suppose $s\neq s'$. Thus, $s$ belongs to the set of actions $A$
of a certain state $\hat{s}$ ($\hat{s}$ is the parent state of $s$,
i.e. the previous state in the recursion). Therefore, we have $r(s)=0$.
In addition, necessarily before the entry into the third if block
which allows one the calculation of the set $A$ of $\hat{s}$, there
was an iteration $I'$ (potentially $I=I'$) where $\hat{s}$ was
added to $\Spartiel$. More precisely, this addition, during the iteration
$I'$, necessarily takes place in the second if block (since $\hat{s}$
is not terminal). %  parce qu'il est passé dans le large else block où A sera ensuite calculé et lorsqu'un état est non terminal, le seul endroit où on peut j'ajoute dans Spartial c'est dans ce block.
 Thus, when entering this second if block, if $s$ is terminal, $r(s)$
was set by $1$. By contraposition, if $r(s)=0$ at the output of
this block, then $s$ is not terminal. Since $r(s)=0$ at the beginning
of the iteration $I$ (and if $r(s)=1$, $r(s)$ remains constant
by Lemma \ref{lem:Si-avant-1}), $s$ is therefore not terminal. This
concludes the proof of the lemma.
\end{proof}
We now show that, all Unbounded Minimax-based algorithm mark any game
state as resolved after a finite number of iterations:
\begin{prop}
\label{prop:arret-1} Let $\algo$ be a Unbounded Minimax-based algorithm,
$\etats$ be the set of states of a perfect two-player game, and $s\in\etats$.

There exists $N\in\mathbb{N}$ such that after applying $N$ times
the algorithm $\algo$ on the state $s$, we have $r\left(s\right)=1$. 
\end{prop}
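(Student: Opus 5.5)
If $s$ is terminal there is nothing to show, since terminal states are resolved as soon as they are added to $\Spartiel$ (take $N$ small enough to cover the first iteration). So assume $s$ is non-terminal. The proof is a potential-function argument built on Lemma~\ref{lem:arret-1} and Lemma~\ref{lem:Si-avant-1}.

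Consider the quantity
\[
\Phi=\left|\Spartiel\right|+\left|\liste{\hat{s}\in\Spartiel}{r\left(\hat{s}\right)=1}\right| .
\]
It takes values in $\left\{ 0,\dots,2\left|\etats\right|\right\}$, which is finite because $\etats$ is finite.

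First, $\Phi$ never decreases across an iteration. States are only ever added to $\Spartiel$; the algorithm never removes entries from the transposition table. By Lemma~\ref{lem:Si-avant-1}, a state with $r\left(\hat{s}\right)=1$ keeps this value. Hence both terms of $\Phi$ are non-decreasing.

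Second, as long as $r\left(s\right)=0$, each iteration on $s$ makes $\Phi$ increase strictly. Lemma~\ref{lem:arret-1} applies, since $s$ is non-terminal and $r\left(s\right)=0$. It says the iteration ends in a state $\tilde{s}$ that either was not in $\Spartiel$ and has just been added, or had $r\left(\tilde{s}\right)=0$ and now has $r\left(\tilde{s}\right)=1$. In the first case $\left|\Spartiel\right|$ grows by at least one. In the second case $\tilde{s}$ lies in $\Spartiel$ (a resolved state is always in $\Spartiel$, as noted in the proof of Lemma~\ref{lem:Si-avant-1}), so the second term grows. Either way $\Phi$ gains at least $1$.

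It follows that at most $2\left|\etats\right|$ iterations can be performed while $r\left(s\right)=0$. So after $N=2\left|\etats\right|+1$ applications of $\algo$ we must have $r\left(s\right)=1$. If $s$ becomes resolved earlier, it stays resolved by Lemma~\ref{lem:Si-avant-1}, so any larger $N$ also works.

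The main obstacle is making Lemma~\ref{lem:arret-1} usable, which requires knowing that each iteration actually terminates. This needs two facts. The set $A$ of unresolved children is never empty when it is used; this was argued inside the proof of Lemma~\ref{lem:arret-1}. The descent along a line of play is finite because the game graph is a finite acyclic semilattice, so every path reaches either a state outside $\Spartiel$ or a terminal state. I would state this well-foundedness explicitly and invoke it before applying the potential argument. I would also check that being "in $\Spartiel$" is measured at the start of the iteration, so that the strict increase is counted correctly.
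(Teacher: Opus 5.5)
Your proof is correct and is essentially the paper's argument: Lemma~\ref{lem:arret-1} makes every iteration on an unresolved $s$ either add a state to $\Spartiel$ or newly resolve one, and Lemma~\ref{lem:Si-avant-1} makes this progress permanent. Finiteness of $\etats$ then bounds the number of such iterations by $2\left|\etats\right|$; your potential $\Phi$ is a tidier way to do this count, and it makes the paper's side remark that adding all descendants of $s$ forces $r(s)=1$ unnecessary.
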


\begin{proof}
We show that with at most $N=2\left|\etats\right|$ iterations of
the algorithm $\algo$ applied on a same state $s\in\etats$, we have
$r\left(s\right)=1$. Note first that if $s$ is terminal or satisfies
$r\left(s\right)=1$, then after having applied the algorithm, we
have $r\left(s\right)=1$.

Now suppose that $s$ is not terminal and satisfies $r\left(s\right)=0$.
By Lemma \ref{lem:arret-1}, we have that each iteration adds in $\Spartiel$
at least one state of $\etats$ not being in $\Spartiel$ or marks
as ``resolved'' an additional state, that is, a state $s'\in\etats$
satisfying $r\left(s'\right)=0$, satisfies $r\left(s'\right)=1$
after the iteration. This is sufficient to show the property, because
either after one of the iterations, we have $r\left(s\right)=1$,
or the iterative application of the algorithm ends up adding in $\Spartiel$
all descendants of $s$ and/or by marking as ``resolved'' all states
of $\Spartiel$. Indeed, if all the descendants of $s$ are added
then necessarily $r\left(s\right)=1$ (since by induction all descendants
verify $r\left(s\right)=1$ ; by definition and use of backup\_resolution($s$)).
Since $S$ is finite, with at most $2\left|\etats\right|$ iterations,
$r\left(s\right)=1$.

\end{proof}
From the previous Lemmas and Propositions, we conclude to the completeness
of Unbounded Minimax-based algorithms:
\begin{thm}
Any Unbounded Minimax-based algorithm is complete. 
\end{thm}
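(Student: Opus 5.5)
The theorem should follow by combining Proposition~\ref{prop:arret-1} with Proposition~\ref{prop:exact-1}. I would also use Lemma~\ref{lem:Si-avant-1} to make sure nothing changes after resolution.

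Fix a perfect two-player game $\left(\etats,\actions,\joueur,\fbin\right)$, a state $s\in\etats$, and an Unbounded Minimax-based algorithm $\algo$. By Proposition~\ref{prop:arret-1}, there is an $N\in\mathbb{N}$ (one can take $N\leq2\left|\etats\right|$) such that after $N$ iterations of $\algo$ on $s$ we have $r(s)=1$. The search loop runs while time remains and $r(s)=0$. Each iteration terminates, because a single iteration only follows a finite chain of states in a finite game. Its stopping point is described by Lemma~\ref{lem:arret-1}.

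So I would choose $\tau\in\mathbb{N}^{+}$ at least the number of time units needed to perform $N$ iterations, for instance $\tau=\max(N,1)$ if the unit of search time is one iteration. Running $\algo$ on $s$ with this $\tau$ then produces either exactly those iterations or fewer. Fewer happens only if $r(s)=1$ was reached earlier and the loop stopped. In both cases $r(s)=1$ at the end. Lemma~\ref{lem:Si-avant-1} guarantees that once $r(s)=1$, neither $r(s)$ nor $c(s)$ changes during any further iteration, so the final values are well defined.

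Next, I need $s\in\Spartiel$ in order to apply Proposition~\ref{prop:exact-1}. This holds because any state with $r(s)=1$ lies in $\Spartiel$, as noted at the start of the proof of Lemma~\ref{lem:Si-avant-1}. If $s$ is terminal, it is added with $r(s)=1$ and $c(s)=\fbin(s)=M(s)$ directly. Proposition~\ref{prop:exact-1} then gives $c(s)=M(s)$, which is exactly the definition of completeness.

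The main obstacle is conceptual rather than technical: the notion of search time has to match the iteration count in Proposition~\ref{prop:arret-1}. I would handle this by treating $\tau$ abstractly as a bound that allows at least $N$ iterations. I would also check that the partial tree in Proposition~\ref{prop:exact-1} ("built by the algorithm from a certain state") covers the tree obtained after these iterations; this holds by construction.
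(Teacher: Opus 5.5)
Your proposal is correct and follows the paper's proof, which simply combines Proposition~\ref{prop:arret-1} (after finitely many iterations, $r(s)=1$) with Proposition~\ref{prop:exact-1} (a resolved state satisfies $c(s)=M(s)$). Your extra bookkeeping only makes explicit what the paper leaves implicit: choosing $\tau$ to allow $N$ iterations, using Lemma~\ref{lem:Si-avant-1} for stability, and checking $s\in\Spartiel$ via the remark at the start of that lemma's proof.
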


\begin{proof}
By Proposition~\ref{prop:arret-1}, then by Proposition~\ref{prop:exact-1}. 
\end{proof}

\end{document}